\relax
\documentclass[letterpaper]{article} 
\usepackage{aaai18}  
\usepackage{times}  
\usepackage{helvet}  
\usepackage{courier}  
\usepackage{url}  
\usepackage{graphicx}  
\frenchspacing  
\setlength{\pdfpagewidth}{8.5in}  
\setlength{\pdfpageheight}{11in}  
  \pdfinfo{
/Title (2018 Formatting Instructions for Authors Using LaTeX)
/Author (AAAI Press Staff)}
\setcounter{secnumdepth}{2}  

\usepackage{subcaption}
\usepackage{hyperref}
\usepackage{amsfonts,amsmath,amsthm}
\usepackage{graphicx}
\usepackage{color}
\usepackage{algorithm, algorithmic}
\usepackage[mathscr]{euscript}
\usepackage{multirow}

\usepackage{bbm} 


\newcommand{\p}{p}  
\renewcommand{\S}{\mathscr{S}} 
\newcommand{\A}{\mathscr{A}} 
\renewcommand{\r}{r} 
\newcommand{\M}{\mathscr{M}} 
\newcommand{\Q}{q}
\newcommand{\q}{q}

\newcommand{\T}{\mathcal{T}}
\newcommand{\R}{\mathcal{R}}
\renewcommand{\P}{\mathcal{P}}
\newcommand{\Ppi}{\P^\pi}

\renewcommand{\O}{\mathscr{O}} 
\newcommand{\I}{\mathscr{I}} 
\newcommand{\pio}{{\pi^o}} 
\newcommand{\bo}{\zeta^o}
\newcommand{\io}{\I^o}
\newcommand{\Pom}{P}
\newcommand{\Ro}{R}
\newcommand{\To}{\T_{\O}}

\newcommand{\E}{\mathbb{E}} 
\newcommand{\Exp}[2]{\E_{#1}\left[#2\right]}
\newcommand{\Real}{\mathbb{R}}
\newcommand{\bN}{\mathbb{N}}
\newcommand{\1}{\mathbb{I} } 

\newcommand\defequal{\mathrel{\overset{\makebox[0pt]{\mbox{\tiny def}}}{=}}}

\newtheorem{theorem}{Theorem}

\newtheorem{proposition}{Proposition}
\newtheorem{corollary}{Corollary}



\newif\iflong
\longfalse

\thinmuskip=1mu
\medmuskip=3mu plus 2mu minus 4mu
\thickmuskip=4mu plus 5mu

 \begin{document}
%
\title{Learning with Options that Terminate Off-Policy}

\author{Anna Harutyunyan \\ 
 Vrije Universiteit Brussel \\ 
 Brussels, Belgium 
 \And Peter Vrancx\\
 PROWLER.io\\
 Cambridge, England\\
\And Pierre-Luc Bacon \\ 
 McGill University\\
 Montreal, Canada\\
\And Doina Precup\\
McGill University\\
 Montreal, Canada\\
\And Ann Now\'{e} \\
Vrije Universiteit Brussel \\ 
 Brussels, Belgium 
 }

\maketitle
\begin{abstract}
  A temporally abstract action, or an {\em option}, is specified by a
 policy and a termination condition: the policy guides the option behavior,
 and the termination condition roughly determines its length. 
Generally, learning with longer options (like learning with multi-step returns)
is known to be more efficient. However, if the option set for the task is not
 ideal, and cannot express the primitive optimal policy well,
 shorter options offer more flexibility and can yield a better
 solution. 
Thus, the termination condition puts learning efficiency at odds with solution
 quality.
We propose to resolve this dilemma by decoupling the {\em behavior} and
 {\em target} terminations, just like it is done with policies in
 off-policy learning. To this end, we give a new algorithm, Q($\beta$),
 that learns the solution with respect to 
 {\em any} termination condition, regardless of how the options actually terminate.
 We derive Q($\beta$) by casting learning with options into a
   common framework with well-studied multi-step off-policy 
     learning. 
We validate our algorithm empirically, and show that it holds up to its motivating claims. 
\end{abstract}

\section{Introduction}

Abstraction is essential for scaling up learning, and
there has been a renewed interest in methods that extract, or leverage
it~\cite{vezhnevets2016strategic,kulkarni2016hierarchical,tessler2017deep}.
The options
framework~\cite{sutton1999between} is the 
standard for
modeling {\em temporal} abstraction in reinforcement learning. 
The temporal aspect of an option is determined by its {\em
  termination condition} $\beta$, which roughly determines its length.
Learning and planning with longer
options is known to be more efficient~\cite{mann2015approximate}. This
is partly due to an option having similar properties
to the familiar multi-step $\lambda$-returns,\footnote{The
  similarity is particularly relevant since a full  $\beta$-model~\cite{sutton1995td} is at the basis of both paradigms.} which are known to yield faster convergence~\cite{bertsekas1996neurodynamic}.
The key qualitative difference between $\beta$ and $\lambda$, however, is that
$\beta$ directly affects the solution rather than, like $\lambda$, just the rate of
convergence. If $\beta$ is not trivial, this couples the
quality of the solution with the quality of the options at hand. This
can be restrictive especially if
the options are not perfect, which of course is likely.
Indeed, when a set of options is given, we can show that the more
they terminate, the more optimal the resulting policy is at the
primitive action level. This poses a challenge: on the one hand, we
wish for the options to be long to yield fast convergence and meaningful exploration,
but on the other, if these options are not ideal, the more we commit
to them, the poorer the quality of our solution. {\em Interrupting} suboptimal
options is one way of addressing this~\cite{sutton1999between}, but
like ``cutting'' traces in off-policy learning, it may prevent us
from following a coherent policy for more than a couple of steps.

To this end, we propose to terminate options {\em off-policy}, that is:
decouple the {\em behavior} termination condition that the options
execute with, from the {\em target} termination condition that is
to be factored into the solution. 
The behavior terminations then solely become a means of influencing
convergence speed. We describe a new algorithm, Q($\beta$), that achieves this by
leveraging connections to several old and new multi-step
off-policy temporal difference algorithms.

The paper is organized as follows. After introducing relevant background, we will
derive the fixed point of the classical call-and-return option
operator. 
Using its shape for intuition, we will then incrementally build
up to our proposed {\em off-policy} termination option operator, starting
from the classical intra-option equations. We will analyze the
convergence properties of this operator for both policy evaluation and
control, and state the corresponding online algorithm
Q($\beta$). Finally, we will validate Q($\beta$) empirically, and show
that it can (1) learn to evaluate a task w.r.t. options terminating
off-policy, and (2) learn an optimal solution from suboptimal options quicker than the alternatives.


\section{Framework and notation}

We assume the standard reinforcement learning setting~\cite{sutton-barto17} of an MDP $\M =
(\S, \A, \p, \r, \gamma)$, where $\S$ is the set of states, $\A$ the set
of discrete actions; $\p:\S\times\A\times \S \rightarrow [0,1]$ the
transition that specifies the environment dynamics, with $\p(s'|s, a)$
 (or $\p_{ss'}^a$ in short),
denoting the probability of transitioning to state $s'$ upon taking
action $a$ in $s$; $\r : \S\times\A \rightarrow
[-\r_{\max},\r_{\max}]$ is the reward function, and $\gamma$ the
scalar discount factor. A policy is a probabilistic mapping from
states to actions. For a policy $\pi$, and the MDP transition matrix $\p$, let the matrix $\p^\pi$ denote the
dynamics of the induced Markov chain: $\p^\pi(s,s') =
\sum_{a\in\A}\pi(a|s)\p(s'|s,a),$ and $\r^\pi$ the reward expected for each
state under $\pi$: $r^\pi(s) = \sum_{a\in\A} \pi(a|s) \r(s,a).$

Consider the Q-function $\Q$ as a
mapping $\S \times \A \to \Real$, and define the one-step transition
operator over Q-functions as:
\begin{equation}
(\Ppi \Q)(s,a) \defequal \sum_{s' \in \S} \sum_{a' \in \A} p(s' |
s, a) \pi(a' | s') \Q(s', a'). \label{eq:Ppi-operator}
\end{equation}
Using operator notation, define the Q-function corresponding to the {\em value} of policy $\pi$
as: \[ \q^\pi \defequal \sum_{t=0}^\infty \gamma^t (\Ppi)^t \r = \r +
  \gamma\Ppi\q^\pi = (I-\gamma\P^\pi)^{-1}\r,\] 
the recursive form of which defines the Bellman
equation for the policy $\pi$~\cite{Bellman:1957}. The corresponding
one-step Bellman operator can be applied to any Q-function:
 \begin{align}
\label{eq:bellman-q-operator}
\T^\pi \Q (s,a) & \defequal r(s,a) + \gamma \Ppi \Q(s,a),
\end{align}
and its repeated applications are guaranteed to produce its fixed
point $\q^\pi$~\cite{puterman94}. The
policy evaluation setting is concerned with estimating this quantity for a
given policy $\pi$.

The Bellman {\em optimality} operator introduces maximization over the set
of policies: $\T\Q \defequal r+ \gamma \max_\pi \Ppi \Q,$
and its repeated applications are guaranteed to produce its fixed
point, $\q^* \defequal \max_{\pi}\q^\pi = \q^{\pi^*}$, which is the value of the {\em optimal}
policy $\pi^*$. The control setting is concerned with finding this value. A
policy is {\em greedy} w.r.t. a Q-function if at each state it
picks the action of maximum value. The optimal policy $\pi^*$ is
greedy w.r.t. the optimal Q-function $\q^*$.

Throughout, we will write ${\mathbf 1}$ and ${\mathbf 0}$ for the
vectors with 1- or 0-components. For a policy $\pi^b$, we will use $\E_{\pi^b}[\cdot]$ as a shorthand
for the expectation $\E_{S_{1:\infty}, A_{1:\infty} | \pi^b}[\cdot]$ w.r.t. the
trajectory $S_0=s, A_0=a, S_1, A_1, \ldots$, with $A_t$ drawn according to
$\pi^b (\cdot | S_t)$ and $S_{t+1}$ drawn according to 
$\p(\cdot|S_t, A_t)$.
We will occasionally use $R_{t+1}$ to denote
$r(S_t, A_t)$, and write arguments in subscripts (i.e. use $x(s)$
and  $x_s$ equivalently). In the learning setting, the algorithms often update with a sample
of the residual $\T^\pi\q-\q$:
\[\delta_t = R_{t+1} + \gamma\q(S_{t+1}, A_{t+1}) -\q(S_t,A_t),\]
which is referred to as a temporal difference (TD) error.

\subsection{Multi-step off-policy TD learning}

One need not only consider single-step operators, and may apply
$\T^\pi$ and $\T$ repeatedly. The
$\lambda$-operator is a particularly flexible mixture of such multi-step operators:
\[\T^\pi_\lambda\q =
  (1-\lambda)\sum_{t=0}^\infty\lambda^n(\T^\pi)^n\q = \q +
  (I-\lambda\gamma\P^\pi)^{-1}(\T^\pi\q - \q).\]
In the learning setting, this corresponds to considering multi-step
{\em returns}, rather than one-step samples for updating the estimate, and $\lambda$ trades off the bias of
bootstrapping with an approximate estimate with the variance of
sampling multiple steps. A high intermediate value of $\lambda$ is
typically best in practice~\cite{kearns2000bias}.

{\em Off-policy learning} is the setting where the behavior and target
policies are decoupled. That is: $\pi^b \neq \pi$. Multi-step methods pose a challenge when considered off-policy, and
advances have recently been made in this
direction~\cite{munos2016safe,mahmood2017multistep}. To this end,
\citeauthor{munos2016safe}~\shortcite{munos2016safe} unified several off-policy return-based algorithms under a common umbrella:
\begin{align}
\R^{\pi, \pi^b}_c\Q(s,a) & \defequal \Q(s,a) +  \E_{\pi^b} \Big [\sum_{t=0}^\infty
  \gamma^t\Big (\prod_{i=1}^t c_i \Big) \delta^\pi_t \Big], \label{eq:general-operator} \\
 \delta^\pi_t & =  R_{t+1} + \gamma \E_\pi\Q(S_{t+1},\cdot) - \Q(S_t,  A_t). \notag
\end{align}
where \[\E_\pi\Q(s\cdot) \defequal \sum_{a\in\A}\pi(a|s)\q(s,a),\]
and the {\em trace} $c_i$ is a state-action coefficient, whose
specific shapes correspond to different algorithms. 
In particular, {\em Tree-Backup($\lambda$)} sets $c_i$ to
$\lambda\pi_b(A_i|S_i)$~\cite{precup2000eligibility},
while {\em Retrace($\lambda$)} sets $c_i$
to $\lambda\min(1, \frac{\pi(A_i|S_i)}{\pi_b(A_i|S_i)})$~\cite{munos2016safe}.

\subsubsection{Unifying algorithm}
Recently,
\citeauthor{deasis2017multistep}~\shortcite{deasis2017multistep} formulated an algorithm that
encompasses the ones discussed so far even more generally. Instead of
the binary taxonomy of on-and off-policy algorithms, the authors
introduce a parameter
$\sigma$ that smoothly transitions between the two. This algorithm can also
be expressed via Eq.~\eqref{eq:general-operator} with:
\begin{align*}
  \delta^{\sigma,\pi}_t  & =  R_{t+1} + \gamma (\sigma\q(S_{t+1}, A_{t+1})
  \\
   & \qquad +
  (1-\sigma)\E_\pi\Q(S_{t+1},\cdot)) - \Q(S_t,  A_t), \\
   c_i & = \lambda((1-\sigma)\pi_b(A_i|S_i) + \sigma).
\end{align*}
In particular, $\sigma=1$ corresponds to the on-policy SARSA($\lambda$) algorithm, while
$\sigma=0$ to Tree-Backup($\lambda$).

\subsection{Options}
An option $o$ is a tuple $(\io, \bo, \pio)$, with $\io\subseteq\S$ the
initiation set, from which an option may start, $\bo:\S\rightarrow
[0,1]$, the probabilistic termination condition, and $\pio$,
the option policy with which it navigates through the environment.
Just like the MDP reward and transition models $r$ and $p$, options can be seen to
induce {\em semi-}MDP~\cite{puterman94} models $\Ro$ and $\Pom$ as follows~\cite{sutton1999between}: 
\begin{align}
  \Pom_{ss'}^o & \defequal \Exp{D:s\rightarrow s' | o}{\gamma^D}
                \notag 
\\  & 
= \gamma \p^{\pio}_{ss'} \beta_{s'} + \gamma \sum_{s''\in\io} 
     p^\pio_{ss'} (1-\beta_{s''}) \Pom_{s''s'}^o, \notag 
 \\
\Ro_s^o & \defequal \Exp{D:s | o}{\sum_{i=1}^D
     \gamma^{i-1}r^\pio(S_{t+i}) | S_t = s} \notag \\
& = \r_s^{\pio} +  \gamma \sum_{s'\in\io} 
     p^\pio_{ss'}  (1-\beta_{s'}) \Ro^o_{s'}. \label{eq:Ro-plain}
\end{align}
where $\Exp{D:s | o}{\cdot}$ and $\Exp{D:s\rightarrow s' | o}{\cdot}$  are the expectations of the option
duration $D$ from state $s$ and the travel time between states $s$ and
$s'$, respectively, w.r.t. option dynamics $p^\pio$ and the
termination condition $\beta^o$.

In the {\em call-and-return} model of option execution, an option is
run until completion (according to its termination condition),
and only then a new option choice is
made~\cite{precup1998theoretical}. 
This suggests the following state{\em-option} analogues of the
state-action transition operator from Eq.~\eqref{eq:Ppi-operator}, and
the Bellman operator from Eq.~\eqref{eq:bellman-q-operator}. For a policy over options $\mu$:
\begin{align}
  (\P^\mu_{\O}\q)(s,o) & \defequal \sum_{s'}P^o(s,s') \sum_{o'}\mu(o'|s')
  \q(s',o') \label{eq:Pmu-operator}\\
  \To^\mu \q(s,o) & \defequal \Ro^o_s + \P^{\mu}_{\O}\q(s,o). \label{eq:Bellman-options}
\end{align} 
 It will also be relevant to consider the {\em marginal} flat policy $\kappa$ over primitive actions:
\begin{equation}
  \label{eq:sigma}
  \kappa(a|s) \defequal \sum_o\mu(o|s)\pio(a|s).
\end{equation}
For simplicity, we assume that options can initiate anywhere: $\io = \S,
\forall o\in\O$.
Finally, in the main departure from the standard framework, we will distinguish
between the  {\em target} termination condition $\beta$ that is to be
estimated, and a {\em behavior} termination condition $\zeta$ ({\em
  ``zeta''}) that the options actually terminate with.



\section{The call-and-return operator}


Before proceeding to describe our key idea and algorithm, let us
quantify the role of $\beta$ in the target solution of planning with
options. To do this, we plan to
derive this solution at the primitive action resolution. 

Let $\nu$ be an arbitrary policy over
options, and $c: \S\times \O \rightarrow [0, 1]$ a
coefficient function. Consider the following transition operator and
its corresponding Bellman operator: 
\begin{align}
   (\P^{c\nu}\q)(s, o) & \defequal  \sum_{s'\in\S} \p^\pio_{ss'} c(s',
                         o) \sum_{o'}\nu(o'|s')
                         \q(s',o'), \notag \\
\T^{c\nu}\q & \defequal c r^\pi + \gamma\P^{c\nu}\q, \notag
\end{align}
where $r^\pi$ is the $|\S|\times|\O|$-vector of $r^\pio$ for all options. Note that $\T^{c\nu}$, like
$\T^\pi$, is a one-step operator, whereas  $\To^\mu$ is an
option-level operator.
The transition operator $\P^{c\nu}$ in particular defines the
following operators corresponding to option {\em continuation} and {\em termination}, respectively:
\begin{align*}
  \P^{(1-\beta)\iota}(s, o) & \defequal \sum_{s'\in\S} \p^\pio_{ss'}
                         (1-\beta^o(s'))\q(s',o), \\
 \P^{\beta\mu} (s, o) & \defequal \sum_{s'\in\S} \p^\pio_{ss'}
                         \beta^o(s')\sum_{o'} \mu(o'|s') \q(s',o'). 
\end{align*}
That is: $\iota$ ({\em ``iota''}) is the policy over options that maintains the
current (argument) option. Using these operators, we can
express $\P_{\O}^\mu$ from Eq.~\eqref{eq:Pmu-operator} and the reward
model from Eq.~\eqref{eq:Ro-plain} concisely for
all state-option pairs:
\begin{align*}
  \P_{\O}^\mu\q & = (I - \gamma\P^{(1-\beta)\iota})^{-1}
                  \gamma\P^{\beta\mu}\q, \\
   \Ro & = (I - \gamma\P^{(1-\beta)\iota})^{-1} r^\pi,
\end{align*}
and rewrite the option-level Bellman operator from
Eq.~\eqref{eq:Bellman-options}:
\begin{align}
   \To^\mu\q & 
=  (I - \gamma\P^{(1-\beta)\iota})^{-1} (\r^\pi + \gamma
     \P^{\beta\mu} q). \label{eq:To} 
\end{align}
The following
proposition derives the fixed point of $\To^\mu$ in terms of the one-step
operators $\P^{(1-\beta)\iota}$ and  $\P^{\beta\mu}$.
\begin{proposition}
\label{prop:offpol-options}
  The fixed
 point of the operator $\To^{\mu}$ 
 is the same as the fixed point of the operator
 $\T^{(1-\beta)\iota} + \T^{\beta\mu}$, and writes:  
\begin{equation}
\label{eq:q-mu-iota}
\Q^{\mu,\iota}_\beta =   (I-\gamma(\P^{\beta\mu}-\P^{\beta\iota})
-\gamma\P^{1\iota})^{-1} \r^\pi.
\end{equation}
\end{proposition}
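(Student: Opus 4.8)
The plan is to show that the two operators in the statement induce the same fixed-point equation — one linear system in $\q$ — then solve that system and rewrite the solution in the claimed form. The linear part of $\T^{(1-\beta)\iota}+\T^{\beta\mu}$ is $\gamma$ times a stochastic matrix, and $\To^\mu$ is the usual (contracting) SMDP Bellman operator, so each has a unique fixed point; it therefore suffices to produce one common solution of the two fixed-point equations.

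First I would treat $\To^\mu$. Setting $\q=\To^\mu\q$ in the closed form \eqref{eq:To} and left-multiplying by $I-\gamma\P^{(1-\beta)\iota}$ — which is invertible since $\P^{(1-\beta)\iota}$ is sub-stochastic — yields
\[
(I-\gamma\P^{(1-\beta)\iota})\,\q=\r^\pi+\gamma\P^{\beta\mu}\q
\qquad\Longleftrightarrow\qquad
(I-\gamma\P^{(1-\beta)\iota}-\gamma\P^{\beta\mu})\,\q=\r^\pi .
\]
Next I would expand $\T^{(1-\beta)\iota}+\T^{\beta\mu}$ from the definition of $\T^{c\nu}$: the reward terms combine as $(1-\beta)\r^\pi+\beta\r^\pi=\r^\pi$ and the transition terms as $\gamma\P^{(1-\beta)\iota}\q+\gamma\P^{\beta\mu}\q$, so $(\T^{(1-\beta)\iota}+\T^{\beta\mu})\q=\r^\pi+\gamma(\P^{(1-\beta)\iota}+\P^{\beta\mu})\q$, and its fixed point obeys the very same equation.

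It remains to solve that equation and massage the inverse. The matrix $\P^{(1-\beta)\iota}+\P^{\beta\mu}$ is row-stochastic: each $(s,o)$-row sums to $\sum_{s'}\p^\pio_{ss'}\big((1-\beta^o_{s'})+\beta^o_{s'}\big)=1$. Hence $\gamma(\P^{(1-\beta)\iota}+\P^{\beta\mu})$ has spectral radius at most $\gamma<1$, the system is uniquely solvable, and the common fixed point is $(I-\gamma\P^{(1-\beta)\iota}-\gamma\P^{\beta\mu})^{-1}\r^\pi$. Since $\iota$ keeps the current option, $\P^{1\iota}$ and $\P^{\beta\iota}$ do not mix options and $\P^{(1-\beta)\iota}=\P^{1\iota}-\P^{\beta\iota}$; substituting gives $I-\gamma\P^{(1-\beta)\iota}-\gamma\P^{\beta\mu}=I-\gamma(\P^{\beta\mu}-\P^{\beta\iota})-\gamma\P^{1\iota}$, which is exactly \eqref{eq:q-mu-iota}.

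I do not expect a real obstacle; the only point requiring care is the bookkeeping of the coefficient functions — in particular checking that $\P^{(1-\beta)\iota}=\P^{1\iota}-\P^{\beta\iota}$ is an identity of operators (which relies on $\iota$ being the identity over options, so that $\beta$ enters purely multiplicatively), and confirming that $I-\gamma\P^{(1-\beta)\iota}$ is invertible, so that the closed form \eqref{eq:To} used in the first step is itself well-defined.
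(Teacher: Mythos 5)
Your proposal is correct and follows essentially the same route as the paper's proof: take the fixed-point equation of $\To^\mu$ from Eq.~\eqref{eq:To}, left-multiply by $I-\gamma\P^{(1-\beta)\iota}$ to obtain the one-step equation $\q=\r^\pi+\gamma(\P^{\beta\mu}+\P^{(1-\beta)\iota})\q$, identify it with the fixed-point equation of $\T^{(1-\beta)\iota}+\T^{\beta\mu}$, and use $\P^{(1-\beta)\iota}=\P^{1\iota}-\P^{\beta\iota}$ to rewrite the resolvent in the stated form. The only difference is that you make explicit the invertibility/uniqueness justifications (sub-stochasticity and row-stochasticity of the combined transition matrix) that the paper leaves implicit; one small nitpick is that the identity $\P^{(1-\beta)\iota}=\P^{1\iota}-\P^{\beta\iota}$ follows simply from linearity of $\P^{c\nu}$ in the coefficient $c$ and does not specifically require $\iota$ to be the identity over options.
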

Thus, the termination scheme directly affects the
convergence limit: in the extreme, if $\beta={\mathbf 0}$, options never
terminate, and we have the fixed
point of $\T^{1\iota}$: $\q^{\mu,\iota}_0(s,o) = \E_\pio\q^{\pio}(s, \cdot)$, the value of the option $o$. In the other
extreme, $\beta={\mathbf 1}$, the options terminate at every step and we have
the fixed point of $\T^{1\mu}$, which can be shown (Theorem 1 in \cite{bacon16matrix}) to correspond to the value of the
marginal policy $\kappa$ from Eq.~\eqref{eq:sigma}:
\begin{equation}
\q^{\mu,\iota}_1(s,o) = \E_\kappa\q^{\kappa}(s,\cdot), \forall o\in \O. \label{eq:qkappa}
\end{equation} 

\section{Off-policy option termination}
\label{sec:offpolicy-term}
We would like to decouple the {\em
  target} termination condition $\beta$ that factors into the
solution from the {\em behavior} termination condition
$\zeta$  that governs for how long the options are followed. Apart from the theoretical appeal of the freedom that this
allows, a key motivation is in the fact that on the one hand just like with multi-step
returns, the less options terminate the faster the convergence, but on the other 
the more options terminate, the better the control solution (as we
show formally in the next section). Being able to decouple the two
allows one to achieve the best of both worlds, 
and is exactly what we propose to do in this paper.

The critical insight in our approach is the {\em
  off-policy-ness} at the primitive action level that is introduced by
the discrepancy between policies $\mu$ (that picks a new option) and
$\iota$ (that maintains the current option) in
Eq.~\eqref{eq:q-mu-iota}. 
 The {\em degree} of this off-policy-ness is modulated exactly by the
 termination condition $\beta$, and is usually implicit. 
 We propose to 
leverage multi-step off-policy learning and to impose a desired degree
{\em explicitly}, independent of the behavior terminations $\zeta$.\footnote{Technically this should be
referred to as {\em off-termination} learning, since the difference is indeed
 in termination conditions, of which the induced policies are a
 consequence: the ``behavior policy'' $\iota$ takes the current option w.p. 1, while
 the ``target policy'' is the actual policy over options $\mu$.} 
In the extreme, we can learn the marginal policy
$\kappa$ directly. In the other extreme, we can learn the value of the current
option. The two extremes are traded off via $\beta$. 
The algorithm we propose is analogous to the unifying algorithm Q($\sigma$) in
which $\sigma$ modulates the degree of
off-policy-ness~\cite{deasis2017multistep}.

To highlight the parallels between learning with options and
learning off-policy from multi-step returns, this section will
incrementally build the technical intuition towards the proposed {\em off-policy termination} operator, starting from the classical
intra-option equations. The next section will analyze the convergence
of this operator.

\subsection{From one step intra-option learning to General Q($\lambda$)}
To begin, let us first re-derive the target from
Proposition~\ref{prop:offpol-options} starting from the familiar
intra-option equations~\cite{sutton1998intra}. At this point let us
assume that the options execute w.r.t. some behavior condition $\zeta$.
Letting $\Delta$ denote the
update on the estimated Q-function, we have at time $t$ and the
current option $o$: 
\begin{align}
  \Delta\q(S_t, o) & \propto r(S_t, A_t) + \gamma \tilde{\q}(S_{t+1}, o) - \q(S_t,
  o), \label{eq:intra-option}\\
\tilde{\q}(s, o) & = (1-\bo(s))\q(s,o) + \bo(s)\E_\mu\q(s,\cdot), \notag
\end{align}
where as before we write 
  $\E_{\mu}{\q(s,\cdot)} \defequal \sum_{o}\mu(o|s)\q(s,o).$
Notice that this is exactly a sample of the one-step update
corresponding to $\T^{(1-\zeta)\iota} + \T^{\zeta\mu}$. In fact, if we
roll it out over multiple steps, and take an expectation, we obtain
Eq.~\eqref{eq:To} for the behavior $\zeta$ exactly:
\begin{align}
q(s, o) & = \E_{\pio}\Big[
\sum_{t=0}^\infty 
 \gamma^t \Big(\prod_{i=1}^t (1 -
                   \bo(S_i)) \Big) \notag \\
  & \qquad 
     [ r(S_t, A_t)  + \gamma\bo(S_{t+1})\E_\mu\q(S_{t+1},\cdot)]
    \Big], 
    \label{eq:intra-option-multi-step}
\end{align}
from which some simple algebra yields:
\begin{align}
\Delta q(s, o) & = r^{\pi_o}(s) + \gamma\E_\mu\q(S_{t+1},\cdot) -
                 \q(s, o) \notag 
\\ & \quad +
 \E_{\pio}\Big[ \sum_{t=1}^\infty 
 \gamma^t \Big(\prod_{i=1}^t (1 - \bo(S_i))\Big) \delta^{\mu\mu}_t
     \Big], \label{eq:peng} \\
  \delta^{\mu\mu}_t & = r(S_t, A_t) +
    \gamma\E_\mu\q(S_{t+1},\cdot) - \E_\mu\q(S_{t},\cdot). \notag
 \end{align}
This is the same (expected) update as Peng's Q($\lambda$) for greedy
policies~\cite{peng1996incremental} or General Q($\lambda$)
(i.e. off-policy Expected SARSA($\lambda$)~\cite{van2009theoretical})
for arbitrary policies, with $1 - \bo(S_i)$ being the state-option analogue of
$\lambda$ from those algorithms.
The fixed point of both of those algorithms is given in Proposition 1 in
\cite{harutyunyan16qlambda} and is indeed analogous to that given in Proposition~\ref{prop:offpol-options}
in this paper. 

\subsection{General Q($\lambda$) to Tree-Backup($\lambda$)}
The update~\eqref{eq:peng} converges to the fixed point from
Prop.~\ref{prop:offpol-options}, which is an on-/off-policy mixture,
regulated by the behavior terminations $\zeta$. We wish to reweigh this
mixture by the desired target terminations $\beta$. Let us now begin
introducing the off-policy machinery that will allow us to do so. It
will be useful to cast the multi-step intra-option update~\eqref{eq:peng} in
the form of the general off-policy operator~\eqref{eq:general-operator}. 
This can be done by replacing the second expectation in the TD-error with the point-estimate $q(S_t,
o)$, which introduces {\em off-policy
  corrections}: 
\begin{align*}
 \delta^\mu_t & = r(S_t, A_t) + \gamma\E_\mu\q(S_{t+1},\cdot) - \q(S_{t},o).
\end{align*}
This update will converge to the ``off-policy'' fixed point of $\T^{1\mu}$ only if $\mu$ and $\iota$ are
close~\cite{harutyunyan16qlambda}, which, in our particular case of
the indicator behavior policy $\iota$, is not a very interesting
scenario.
 If we further augment the ``trace'' $1 - \bo(S_i)$ with the policy probability
coefficient $\mu(o|S_i)$, we obtain option-level
Tree-Backup($\lambda$)~\cite{precup2000eligibility} whose target
policy is $\mu$, and behavior policy is $\iota$, and with $1 -
\bo(S_i)$ being the state-option analogue of $\lambda$:
\begin{align}
\Delta q(s, o) & = \E_{\pio}\Big[ \sum_{t=0}^\infty 
 \gamma^t \Big(\prod_{i=1}^t c^o_i\Big) \delta^\mu_t
                 | S_0 = s\Big], \label{eq:tb-full} \\
c^o_i & = \mu(o|S_i)  (1 - \zeta^o(S_i)). \notag 
  \end{align}
From the convergence guarantees of Tree-Backup, we know that this update
converges to the fixed point of $\T^{1\mu}$, which in turn corresponds to
$\q^\kappa$ (Eq.~\eqref{eq:qkappa}).

\subsection{The off-policy termination operator}
We are now ready to present the operator underlying the Q($\beta$)
algorithm, which is the main contribution of this paper.
Eq.~\eqref{eq:tb-full} can be considered a special case where we correct {\em all} of the off-policy-ness,
thus implicitly assuming $\beta = {\mathbf 1}$. To obtain the general
case, we need to split the target in each TD-error into two off- and
on-policy terms:
\begin{align}
\R^\mu_\beta q(s, o) & = q(s,o) + \sum_{t=0}^\infty 
 \gamma^t \E_\pio\Big[\Big(\prod_{i=1}^t c^o_i \Big)
                       \delta^{\beta,\mu}_t \Big], \label{eq:general-update} \\
  \delta^{\beta,\mu}_t & = r(S_t, A_t) + \gamma\tilde{q}(S_{t+1}, o) -
    \q(S_{t}, o), \notag \\
\tilde{q}(s, o)  & = (1-\beta^o(s))\q(s, o) +
                             \beta^o(s) \E_\mu \q(s,\cdot), \notag \\
c^o_i & = (1  - \zeta^o(S_i)) \left( 1-\beta^o(S_i) +
                           \beta^o(S_i) \mu(o|S_i) \right). \notag
 \end{align}
Inspecting these updates, it is clear that the parameter $\beta$
controls the degree of partial ``off-policy-ness'': in the next state,
with a weight
$1-\beta^o(S_{t+1})$ the option continues and the update considers the
``on-policy'' current option value $\q(S_{t+1},o)$, and with a weight
$\beta^o(S_{t+1})$, the option terminates, and the update considers the ``off-policy''
value w.r.t. the policy over options $\sum_{o'}\mu(o'|s)\Q(S_{t+1},o')$. Note
that, these coefficients appear in the correction terms $c^o_i$ as well, since
in order for the algorithm to converge to the correct on-/off-policy
mixed target, the $\beta^o(S_{t+1})$-weighted off-policy portion needs to be
corrected. Finally, note that in the learning setting, the second factor in $c^o_i$ is explicit, while $1  -
\zeta^o(S_i)$ is sampled from the current option. 
Algorithm~\ref{alg:qbeta} presents the
forward view of the algorithm underlying this expected operator, for
the general case of an evolving sequence of policies $(\mu_k)_{k\in\bN}$.

This algorithm is very similar to the recently formalized
Q($\sigma$)~\cite{deasis2017multistep,sutton-barto17}, with $\beta$ being
a state-option generalization of $1-\sigma$. In Q($\sigma$), the
parameter $\sigma$
controls the degree of off-policy-ness: $\sigma=1$ corresponds
to (on-policy) SARSA, and $\sigma=0$ to (off-policy)
Tree-Backup. Analogously, Q($\beta$) with
$\beta={\mathbf 0}$ learns the ``on-policy'' value of the
current option policy $\pio$ 
(i.e. the policy $\iota$), and Q($\beta$) with $\beta={\mathbf 1}$
learns the ``off-policy'' value of
the marginal policy $\kappa$ (i.e. the policy $\mu$).
The behavior terminations $\zeta$ on the other hand have a role
analogous to that of the eligibility trace
parameter $\lambda$. 

 \begin{algorithm}[t]
 \caption{Q($\beta$) algorithm}
\begin{algorithmic}[1]
\medskip
\item[\textbf{Given:}] Option set $\O$, target termination function $\beta$,
initial Q-function $q_0$, step-sizes $(\alpha_k)_{k \in \bN}$, start
state $s_0$
 \STATE $S_0 \gets s_0$
\FOR{$k = 0, 1, \ldots$}
   \STATE Sample an option $o$ from $\mu_k(\cdot|S_0)$
   \STATE Sample the return  $R_1, S_1, R_2 \dots, S_{D_k}$ from
   $\pio$. $D_k$ is determined by sampling $1 - \zeta^o(S_i)$.
    \FOR{$t = 0, 1, \dots D_k-1$}
           \STATE $\delta^{\beta,\mu_k}_t = R_{t+1} +
             \gamma\tilde{q}_{\mu_k} (S_{t+1}, o) - \q(S_{t}, o)$
           \STATE $\tilde{q}_{\mu_k}(s, o)  \defequal (1-\beta^o(s))\q(s, o) +
          \beta^o(s) \E_{\mu_k} \q(s,\cdot)$
           \STATE $c^o_j = 1-\beta^o(S_j) + \beta^o(S_j) \mu(o|S_j) $
           \STATE $\Delta_t =
          \sum_{i=t}^{D_k-1}\gamma^{i-t}\Big(\prod_{j=t+1}^i c^o_j\Big)
          \delta^{\beta,\mu_k}_t $ 
         \STATE $\q_{k+1}(S_t, o) \gets \q_k(S_t, o) + \alpha_k\Delta_t $
    \ENDFOR
    \STATE $S_0 \gets S_{D_k}$
  \ENDFOR
\end{algorithmic}
\label{alg:qbeta}
\end{algorithm}

\subsection{Relationship with intra-option learning} 
The off-policy-ness discussed so far
is 
different than that in the more familiar off-policy intra-option
setting. The intra-option learning algorithm suggests applying the
update~\eqref{eq:intra-option}
to all options $o$ ``consistent with'' the experience
stream $S_1, A_1, \ldots$~\cite{sutton1998intra}. 
When considering multiple steps,
this amounts to introducing importance sampling into Eq.~\eqref{eq:intra-option-multi-step}.
That is, given a behavior option $b$, the trace coefficient $1-\bo(S_i)$ from
Eq.~\eqref{eq:intra-option-multi-step} now becomes 
\begin{equation}
  c^{ob}_i = \frac{\pio(A_i|S_i)}{\pi^b(A_i|S_i)} (1 - \bo(S_i)). \label{eq:intra-option-is}
\end{equation}
And hence, writing $\bo_{t+1}$ for $\bo(S_{t+1})$, the value of
option $o$ from Eq.~\eqref{eq:intra-option-multi-step} writes:
\begin{align*}
  q(s, o) & = \E_{\pi^b} \sum_{t=0}^\infty 
 \gamma^t \Big[\Big(\prod_{i=1}^t c^{ob}_i \Big) [R_{t+1} + \gamma\bo_{t+1}\E_\mu\q(S_{t+1},\cdot)] \Big]
\end{align*}
Now, notice that there are two sources of off-policy-ness in these
formulas. One is $\pi^b$ vs. $\pi^o$, the contrast between option
policies, and the other is in the target:  $\iota$ 
vs. $\mu$ itself, as discussed before. Indeed if we write the above in the form from
Eq.~\eqref{eq:peng} we get a different correction:
\begin{align}
\Delta q(s, o) & = r^{\pi_o}(s) + \gamma\E_\mu\q(S_{t+1},\cdot) -
                 \q(s, o) \notag  \\
& \quad + \E_{\pi^b}\Big[ \sum_{t=1}^\infty 
 \gamma^t \Big(\prod_{i=1}^{t-1} c^{ob}_i\Big) (1 -
                 \bo(S_t)) \delta^{ob}_t\Big], \notag \\ \label{eq:intra-option} 
  \delta^{ob}_t & = \frac{\pio(A_t|S_t)}{\pi^b(A_t|S_t)}
    [R_{t+1} +  \gamma\E_\mu\q(S_{t+1},\cdot)] -\E_\mu\q(S_{t},\cdot). \notag
\end{align}
Since the corrections for the two sources of off-policy-ness are orthogonal,
it could be possible to combine them. 
We leave this for the future.

\section{Analysis}
In this section we will analyze the convergence behavior of the
off-policy termination operator in both policy evaluation and control settings,
and show that learning about shorter target options off-policy is generally
asymptotically more efficient than on-policy.  We will then consider
the relationship of the solution quality in control with option
duration, and show that shorter options generally yield better solutions.

\subsection{Policy evaluation}

 We will prove that the evaluation
operator $\R^\mu_\beta$  is contractive
around the appropriate fixed point, and that its contraction factor is
less than that of the respective {\em on-}policy operator, given the
target options are longer than the behavior ones.

\begin{theorem}[Policy evaluation]
\label{thm:convergence-prediction}
The operator $\R^\mu_\beta$ defined in Eq.~\eqref{eq:general-update} has a
unique fixed point $\Q^{\mu,\iota}_{\beta}$, as defined in Eq.~\eqref{eq:q-mu-iota}. 
Furthermore, 
if for each state $S_i\in\S$ and option $o\in \O$ 
 we have $c^o_i \leq (1-\beta^o(S_i)) + \beta^o(S_i) \mu(o|S_i)$,
then for any Q-function $\Q$:
\begin{equation*}
| \R^\mu_\beta \Q(s,o) - \Q^{\mu,\iota}_{\beta}(s,o) | \leq \eta(s,o) \| \Q(s,o) - \Q^{\mu,\iota}_{\beta}(s,o) \|,
\end{equation*}
where $\eta(s,o) \defequal 1 - (1 - \gamma) \E_\pio \Big  [ \sum_{t=0}^\infty
  \gamma^t \Big(\prod_{i=1}^t c^o_i\Big) \Big ] \leq \gamma.$
\end{theorem}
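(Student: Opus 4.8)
The plan is to follow the template of the Retrace/Tree-Backup convergence proofs (Munos et al.\ 2016): write $\R^\mu_\beta$ as an affine operator $\R^\mu_\beta \Q = \Q + A(\T^{(1-\beta)\iota}+\T^{\beta\mu})\Q - A\Q$ for an appropriate nonnegative operator $A$, identify the fixed point, and bound the ``amount of contraction'' by tracking how much total weight $A$ places on the future. Concretely, first I would rewrite the TD error $\delta^{\beta,\mu}_t$ using the one-step operators already introduced: since $r(S_t,A_t)+\gamma\tilde q(S_{t+1},o)$ is, in expectation over $\pio$, exactly a sample of $(\T^{(1-\beta)\iota}+\T^{\beta\mu})\Q$ evaluated at $(S_t,o)$ minus $\Q(S_t,o)$, the series in Eq.~\eqref{eq:general-update} telescopes into $\R^\mu_\beta\Q = \Q + B\big((\T^{(1-\beta)\iota}+\T^{\beta\mu})\Q - \Q\big)$ where $B \defequal \sum_{t\geq0}\gamma^t \E_\pio[\prod_{i=1}^t c^o_i](\cdot)$ viewed as a (sub)stochastic-kernel-weighted operator acting on the ``future'' factor. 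The first claim, that the fixed point is $\Q^{\mu,\iota}_\beta$, then follows because $B$ is invertible (its Neumann series converges as $c^o_i \le 1$ and there is a $\gamma$ factor) and $(\T^{(1-\beta)\iota}+\T^{\beta\mu})\Q^{\mu,\iota}_\beta = \Q^{\mu,\iota}_\beta$ by Proposition~\ref{prop:offpol-options}; any fixed point of $\R^\mu_\beta$ must satisfy $B((\T^{(1-\beta)\iota}+\T^{\beta\mu})\Q-\Q)=0$, hence is a fixed point of $\T^{(1-\beta)\iota}+\T^{\beta\mu}$, which is unique.

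For the contraction bound, I would subtract $\Q^{\mu,\iota}_\beta$ from both sides and expand. The key structural fact is that the same coefficients $(1-\beta^o)+\beta^o\mu(o|\cdot)$ that appear (bounding) the traces $c^o_i$ also appear as the ``bootstrap weights'' in $\tilde q$: writing $\tilde q(S_{t+1},o)-\tilde q^{\mu,\iota}_\beta(S_{t+1},o) = (1-\beta^o(S_{t+1}))(\Q-\Q^{\mu,\iota}_\beta)(S_{t+1},o) + \beta^o(S_{t+1})\E_\mu(\Q-\Q^{\mu,\iota}_\beta)(S_{t+1},\cdot)$, the difference $\R^\mu_\beta\Q - \Q^{\mu,\iota}_\beta$ becomes a sum over $t\ge1$ of $\gamma^t$ times products of $c^o_i$ times a convex-combination (in the $\E_\mu$ sense, since $\mu(o|\cdot)\le 1$ and the Tree-Backup-style telescoping leaves exactly the ``uncorrected'' mass) of entries of $\Q-\Q^{\mu,\iota}_\beta$. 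Because each such coefficient is $\le (1-\beta^o)+\beta^o\mu(o|\cdot)$ and, by hypothesis, $c^o_i$ is bounded by the same quantity, every entry of $\Q-\Q^{\mu,\iota}_\beta$ that appears is multiplied by a total nonnegative weight at most $\sum_{t\ge1}\gamma^t\E_\pio[\prod_{i=1}^t c^o_i]$. Pulling out $\|\Q-\Q^{\mu,\iota}_\beta\|$ and recognizing $\sum_{t\ge1}\gamma^t\E_\pio[\prod_{i=1}^t c^o_i] = \E_\pio[\sum_{t\ge0}\gamma^t\prod_{i=1}^t c^o_i] - 1$, together with the elementary identity $(1-\gamma)\E_\pio[\sum_{t\ge0}\gamma^t\prod c^o_i] + \gamma\,(\text{that sum of weights}) = \E_\pio[\sum\gamma^t\prod c^o_i]$ for the case $c\equiv1$, yields $|\R^\mu_\beta\Q(s,o)-\Q^{\mu,\iota}_\beta(s,o)| \le \eta(s,o)\|\Q-\Q^{\mu,\iota}_\beta\|$ with $\eta(s,o)=1-(1-\gamma)\E_\pio[\sum_t\gamma^t\prod_{i=1}^t c^o_i]$; the bound $\eta(s,o)\le\gamma$ follows since $\prod_{i=1}^t c^o_i \ge 0$ and the $t=0$ term contributes $1$, so $\E_\pio[\sum_t\gamma^t\prod c^o_i]\ge 1$.

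The main obstacle will be the bookkeeping in the middle step: making precise that the ``leftover'' (uncorrected) weight on each future $\Q$-entry is exactly controlled by $(1-\beta^o)+\beta^o\mu(o|\cdot)$ rather than something larger. This is the options analogue of the delicate cancellation in the Tree-Backup / Retrace proofs, where one must show the positive contributions from the TD errors at successive time steps partially cancel against the $-\Q(S_t,o)$ terms so that only a $\le\!1$-mass survives; here the novelty is the two-way split into the continuing option ($1-\beta^o$) and the switching part ($\beta^o\mu$), and one has to verify that both the bootstrap target $\tilde q$ and the trace $c^o$ carry matching weights so the cancellation goes through option-by-option. I would handle this by treating the sign-definite decomposition carefully — all terms entering the bound are nonnegative once we use $c^o_i\ge0$ and write everything in terms of $|\Q-\Q^{\mu,\iota}_\beta|$ — and then invoking the standard argument that the worst case is realized when $\Q-\Q^{\mu,\iota}_\beta$ is constant and equal in sign everywhere, which collapses the estimate to the claimed $\eta$. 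Uniqueness and existence of the fixed point are then immediate from $\eta<1$ (strict as long as $\gamma<1$).
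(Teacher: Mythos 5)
Your plan is the paper's plan: both follow Theorem 1 of Munos et al.\ (2016), expressing $\R^\mu_\beta \Q - \Q^{\mu,\iota}_\beta$ as a linear combination of the entries of $\Delta\Q \defequal \Q - \Q^{\mu,\iota}_\beta$ with nonnegative coefficients --- nonnegativity being exactly where the hypothesis $c^o_i \le (1-\beta^o(S_i)) + \beta^o(S_i)\mu(o|S_i)$ enters --- and then summing those coefficients to obtain $\eta$. Your identification of the fixed point via the invertible operator $B=(I-\gamma P^{c})^{-1}$ together with Proposition~\ref{prop:offpol-options} is also fine and matches what the paper leaves implicit.

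The gap is in the one quantitative step you do state, which is also the step you yourself flag as ``the main obstacle'' and defer. The total weight placed on the entries of $\Delta\Q$ is \emph{not} $\sum_{t\ge1}\gamma^t\E_\pio[\prod_{i=1}^t c^o_i]=C-1$, where $C\defequal\E_\pio[\sum_{t\ge0}\gamma^t\prod_{i=1}^t c^o_i]$: take $c^o\equiv 0$, so that $C-1=0$, yet $\R^\mu_\beta$ is then the one-step operator $\T^{(1-\beta)\iota}+\T^{\beta\mu}$, which contracts with factor $\gamma$, not $0$. The correct accounting, after cancelling the $-c^o_{t}\Delta\Q(S_{t},o)$ correction against the bootstrap term at step $t$, leaves coefficients $\1_{b=o}(1-\beta^o_t-c^o_t)+\beta^o_t\mu(b|S_t)$ on each option $b$ (each nonnegative under the hypothesis, noting that the $b=o$ coefficient is $1-\beta^o_t-c^o_t+\beta^o_t\mu(o|S_t)$); these sum over $b$ to $1-c^o_t$, so the total weight is $\sum_{t\ge1}\gamma^t\E_\pio\big[\big(\prod_{i=1}^{t-1}c^o_i\big)(1-c^o_t)\big]=\gamma C-(C-1)=1-(1-\gamma)C=\eta(s,o)$, and $\eta\le\gamma$ follows from $C\ge1$ as you say. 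Your ``elementary identity'' as written does not produce this; the essential point you are missing is that the per-step surviving mass is $1-c^o_t$, the \emph{complement} of the trace, not the trace product itself. With that correction, the rest of your argument --- bounding the nonnegative linear combination by the coefficient sum times $\|\Delta\Q\|$ --- goes through exactly as in the paper.
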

\begin{proof}
The proof is analogous to that of Theorem 1 from \cite{munos2016safe}
and is given in appendix.
\end{proof}

\begin{figure*}[h] 
  \centering
  \includegraphics[scale=0.35]{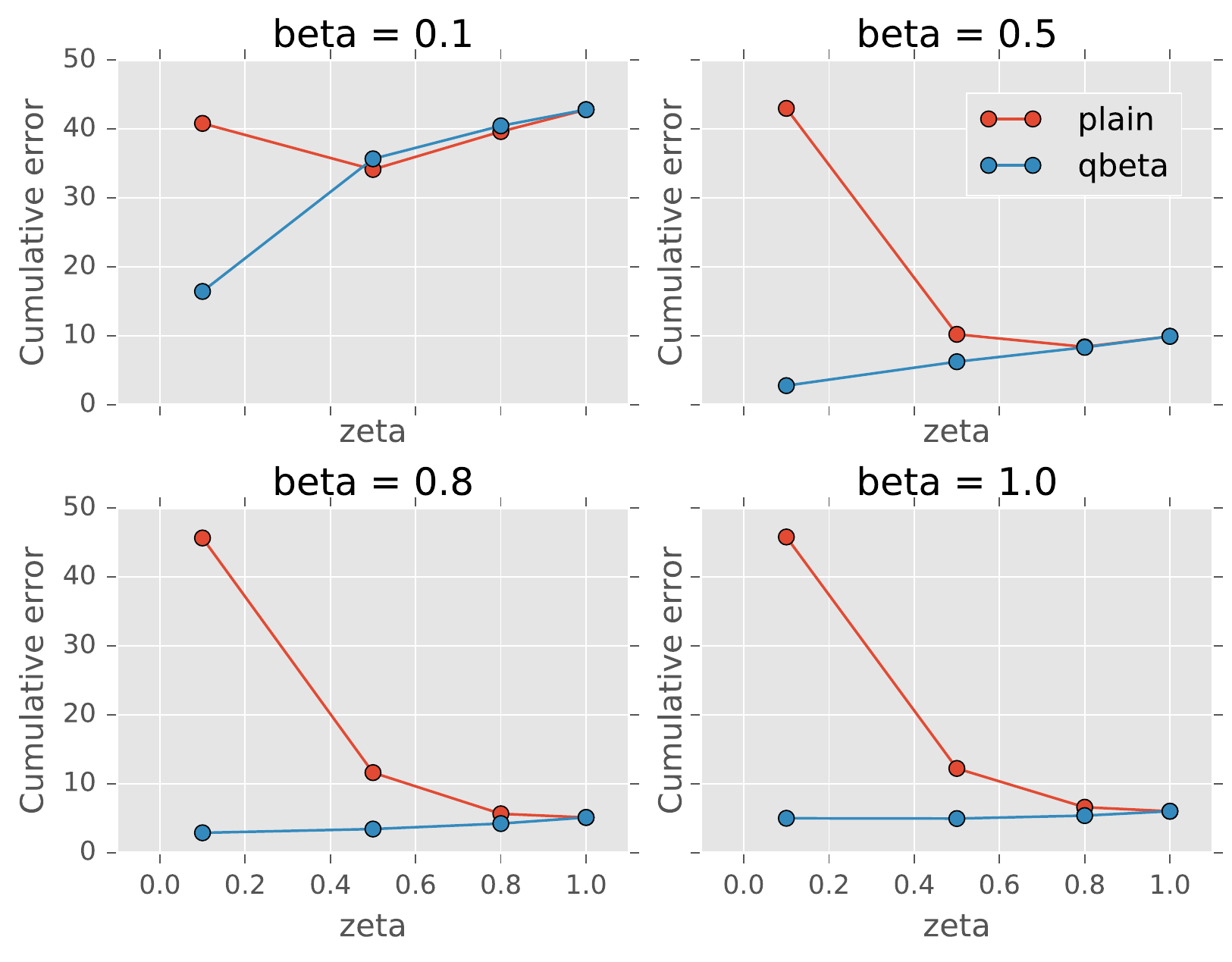}
\includegraphics[scale=0.35]{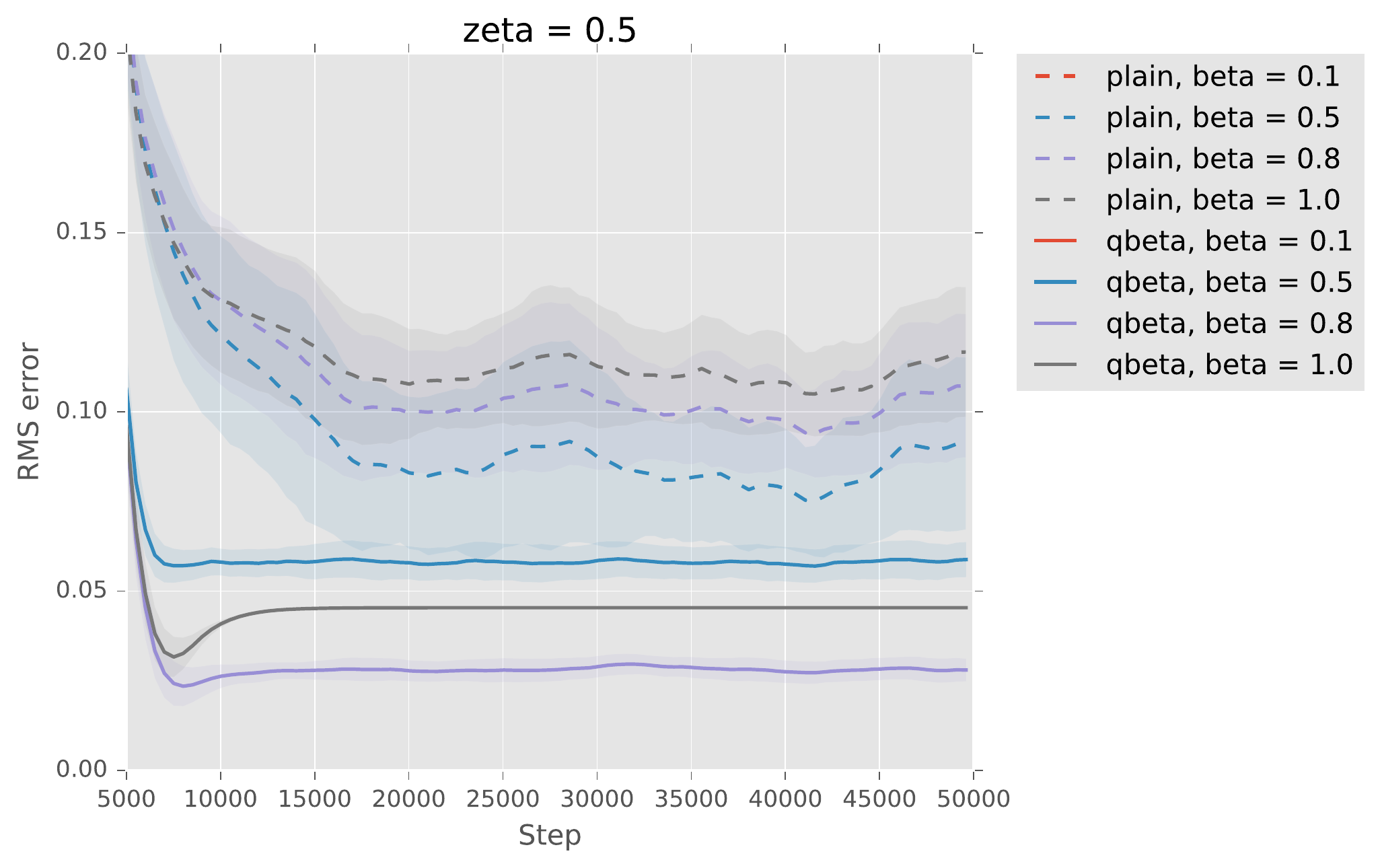} 
 \caption{Prediction error on the 19-state chain task. Each variant is
   an average of 10 seeds. {\bf Left:}
  Sum error for each $\zeta$-$\beta$ combination. Q($\beta$) always gets more efficient as $\zeta$ decreases (the
   options get longer).
\iflong
Notice how
   Q($\beta$) always gets more efficient as $\zeta$ decreases (the
   options get longer). The opposite is true for the plain algorithm,
   since without interrupting sufficiently, it does not have a chance
   to update the intermediate states with anything other than the
   option policy. There is a small inflection point for all plots at
   the {\em on-} policy value of $\zeta$. 
\fi
{\bf Right:} Example
   learning curves. The lines corresponding to $\beta=0.1$ are outside
   the axes' bounds. The shaded region covers standard deviation.}
  \label{fig:prediction}
\end{figure*}

The contraction coefficient $\eta$ controls the convergence speed of
this operator: the smaller $\eta$ (and the larger $\prod_{i=1}^t
c^o_i$) the fewer iterations are needed to converge, but the larger
the computational expense when planning, or 
the variance when
learning~\cite{bertsekas1996temporal,munos2016safe}. 
Since $c^o_i \leq 1$, and options terminate eventually, the variance
is less significant here, and generally, larger $c^o_i$ will yield
faster convergence. 
In our case, since a behavior option is assumed (i.e. the
$1-\zeta^o(S_i)$ factor in Eq.~\eqref{eq:general-update} is fixed), the additional
$\beta$-term in $c^o_i$ can only {\em reduce} the existing trace.
However, since we are interested in learning about a
different target, we ought to compare these traces to the setting in
which that same
target is learnt {\em on-}policy. The following corollary derives a sufficient
condition for Q($\beta$) to maintain larger traces than its on-policy counterpart.
\begin{corollary}
\label{corr:faster-convergence}
  If for all states $s$ and options $o$ we have:
\[\beta^o(s) \geq 1-\frac{\mu(o|s)  (1-\zeta^o(s))}{\mu(o|s) (1 - \zeta^o(s)) +
            \zeta^o(s)}, \] then the
        iteration~\eqref{eq:general-update} converges faster than if
        $\beta^o$ was used on-policy, in place of $\zeta^o$ in iteration~\eqref{eq:intra-option-multi-step}.
\end{corollary}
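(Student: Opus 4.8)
The plan is to reduce the whole statement to a pointwise comparison of trace coefficients. First I would note that the two iterations being compared share the same limit: by Theorem~\ref{thm:convergence-prediction} iteration~\eqref{eq:general-update} converges to $\Q^{\mu,\iota}_{\beta}$, and running iteration~\eqref{eq:intra-option-multi-step} with $\zeta^o$ replaced by $\beta^o$ is exactly the sampled form of $\T^{(1-\beta)\iota}+\T^{\beta\mu}$, whose fixed point is also $\Q^{\mu,\iota}_{\beta}$ by Proposition~\ref{prop:offpol-options}. Next I would write both contraction factors in the common form $1-(1-\gamma)\,\E_\pio\big[\sum_{t=0}^\infty\gamma^t\prod_{i=1}^t\ell^o_i\big]$ delivered by Theorem~\ref{thm:convergence-prediction}: for Q($\beta$) the per-step factor is $\ell^o_i=c^o_i=(1-\zeta^o(S_i))(1-\beta^o(S_i)+\beta^o(S_i)\mu(o|S_i))$, whereas for the on-policy iteration, in which behavior termination equals target termination equals $\beta^o$ so that no off-policy correction is needed, it is $\ell^o_i=1-\beta^o(S_i)$. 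Since both expectations are over the same trajectory distribution induced by the option policy $\pio$ (the $\zeta$-termination is absorbed into the trace, not into truncating the path), and since $(\ell^o_i)_i\mapsto\eta(s,o)$ is coordinatewise nonincreasing, it suffices to prove the pointwise inequality $c^o_i\ge 1-\beta^o(S_i)$; this yields $\eta(s,o)\le\eta_{\mathrm{on}}(s,o)$ at every $(s,o)$, i.e. faster convergence.

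The remaining step is a short computation. The inequality $(1-\zeta^o(s))(1-\beta^o(s)+\beta^o(s)\mu(o|s))\ge 1-\beta^o(s)$ is trivial when $\beta^o(s)=1$. For $\beta^o(s)<1$ I would expand the left-hand side and collect terms, reducing the inequality to $\zeta^o(s)\le\beta^o(s)\big(\zeta^o(s)+(1-\zeta^o(s))\mu(o|s)\big)$, hence, dividing by the positive quantity $\zeta^o(s)+(1-\zeta^o(s))\mu(o|s)$, to $\beta^o(s)\ge\zeta^o(s)/\big(\zeta^o(s)+(1-\zeta^o(s))\mu(o|s)\big)=1-\mu(o|s)(1-\zeta^o(s))/\big(\mu(o|s)(1-\zeta^o(s))+\zeta^o(s)\big)$, which is precisely the hypothesis of Corollary~\ref{corr:faster-convergence}.

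The step that needs care, and which I expect to be the main obstacle, is justifying the contraction factor $1-(1-\gamma)\,\E_\pio\big[\sum_t\gamma^t\prod_i(1-\beta^o(S_i))\big]$ for the on-policy iteration: it is not literally an instance of $\R^\mu_\beta$, so Theorem~\ref{thm:convergence-prediction} cannot be cited verbatim. I would instead run the $\lambda$-return argument directly, observing that the one-step backup $\T^{(1-\beta)\iota}+\T^{\beta\mu}$ is a $\gamma$-contraction because $\P^{(1-\beta)\iota}+\P^{\beta\mu}$ is a genuine stochastic kernel over state-option pairs, and that its state-dependent $\lambda$-return version with decay $1-\beta^o$ contracts at the stated componentwise rate — the degenerate ``behavior $=$ target'' case of the analysis behind Theorem~\ref{thm:convergence-prediction}. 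A secondary subtlety worth a sentence is that ``faster'' must be read componentwise, via $\eta(s,o)\le\eta_{\mathrm{on}}(s,o)$, the inequality being strict wherever the hypothesis is strict at a state reachable under $\pio$.
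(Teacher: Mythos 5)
Your proposal is correct and takes essentially the same route as the paper: the paper's entire proof is the algebraic chain reducing the pointwise trace comparison $(1-\beta^o_s+\beta^o_s\mu(o|s))(1-\zeta^o_s)\geq 1-\beta^o_s$ to the stated bound on $\beta^o(s)$, which is exactly your second paragraph. The extra care you take in the first and third paragraphs --- checking that both iterations share the fixed point $\Q^{\mu,\iota}_\beta$, that the contraction factor is coordinatewise nonincreasing in the traces, and that the on-policy iteration's rate must be justified separately since it is not literally an instance of $\R^\mu_\beta$ --- is left implicit in the paper, which simply appeals to its preceding discussion that larger traces yield a smaller $\eta$ and hence faster convergence.
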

In particular if $\zeta^o(s) = 0$, any $\beta^o(s) > 0$ satisfies
this, irrespective of $\mu$. If $\mu$ is deterministic, this holds for
all $\beta^o(s) > \zeta^o(s)$ for the chosen $o$. In general, the intuition here is that
it's easier to learn from longer option traces about shorter options
than vice versa. 
\iflong
The bound is plotted in Fig.~\ref{fig:bound}.
\fi

\subsection{Control}
\newcommand{\qmi}{\q^{\mu,\iota}}

Let us now formulate the control analogue of
Theorem~\ref{thm:convergence-prediction}. Its proof is a simpler version of that of
Theorem 2 from \cite{munos2016safe}, and we omit it here.



\begin{theorem}[Control]
\label{thm:control}
Consider a sequence of policies over options
$(\mu_k)_{k\in\bN}$ that are greedy w.r.t.~the sequence of estimates
$(\Q_k)_{k\in\bN}$, and consider the update: \[\Q_{k+1} = \R^{\mu_k}_{\beta} \Q_k,\]
where the operator $\R^{\mu_k}_{\beta}$ is defined by
Eq.~\eqref{eq:general-update} for the $k$-th policy $\mu_k$.
Let $\T^{\mu,\iota}_\beta \q = \T^{(1-\beta)\iota}\q +
\T^{\beta\mu}\q$, and let $\q^* =
\max_{\mu}\q^{\mu,\iota}_\beta$. Suppose that $\T^{\mu_0,
  \iota}_\beta\Q_0\geq \Q_0$.
\footnote{
This can  be attained by pessimistic initialization:
$\q_0 \leq \r_{\max} /
  (1-\gamma).$ }
Then for any $k\geq 0$,
\begin{equation*}
\| \Q_{k+1} - \q^* \| \leq \gamma\| \Q_k -  \q^* \|.
\end{equation*}
It follows that $\q_k\rightarrow \q^*$ as $t\rightarrow \infty$.
\end{theorem}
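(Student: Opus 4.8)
\emph{Plan.} I will follow the template of the proof of Theorem~2 in \cite{munos2016safe}, specialised to the operator $\R^{\mu_k}_\beta$ of Eq.~\eqref{eq:general-update}. Write $\R_k := \R^{\mu_k}_\beta$ and $T_k := \T^{\mu_k,\iota}_\beta$, and introduce the Bellman optimality operator $\T^{*,\iota}_\beta\q := \max_\mu\T^{\mu,\iota}_\beta\q$ (the maximum taken pointwise over policies over options), which unrolls to $\T^{*,\iota}_\beta\q(s,o) = r^{\pio}(s) + \gamma\sum_{s'}\p^{\pio}_{ss'}\big[(1-\beta^o(s'))\,\q(s',o) + \beta^o(s')\max_{o'}\q(s',o')\big]$. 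Since the bracket is a convex combination of $\q(s',o)$ and $\max_{o'}\q(s',o')$, $\T^{*,\iota}_\beta$ is a $\gamma$-contraction in the sup-norm, and a routine dynamic-programming argument — it dominates every $\T^{\mu,\iota}_\beta$, with equality at the $\mu$ greedy w.r.t.\ its own fixed point — identifies its unique fixed point with $\q^* = \max_\mu\Q^{\mu,\iota}_\beta$. The fact I will use repeatedly is that, because $\mu_k$ is greedy w.r.t.\ $\Q_k$, we have $T_k\Q_k = \T^{*,\iota}_\beta\Q_k$.

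\emph{Resolvent form.} First I would re-derive, by the same telescoping that underlies Theorem~\ref{thm:convergence-prediction}, that $\R_k$ is affine and can be written as $\R_k\q = \q + (I-\gamma P_k)^{-1}(T_k\q - \q)$, where $P_k$ is the non-negative operator $(P_k f)(s,o) = \sum_{s'}\p^{\pio}_{ss'}\,c^o(s')\,f(s',o)$ with $c^o(s') = (1-\zeta^o(s'))(1-\beta^o(s') + \beta^o(s')\mu_k(o|s'))$; this uses only the one-step residual identity $\E_{\pio}[\delta^{\beta,\mu_k}_t\mid S_t] = (T_k\q - \q)(S_t,o)$ and the $(S_0,\dots,S_t)$-measurability of $\prod_{i\le t}c^o_i$. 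Theorem~\ref{thm:convergence-prediction} together with Proposition~\ref{prop:offpol-options} gives that $\Q^{\mu_k,\iota}_\beta$ is the common fixed point of $\R_k$ and $T_k = \T^{(1-\beta)\iota}+\T^{\beta\mu_k}$. The one structural fact I need to add is the entrywise bound $0 \le P_k \le \P^{(1-\beta)\iota} + \P^{\beta\mu_k}$, which holds because the factor $1-\zeta^o(s')\le 1$ only shrinks each transition coefficient and the cross-option entries of $P_k$ vanish; in particular $\R_k$ is monotone, since $\R_k f - \R_k g = \gamma(I-\gamma P_k)^{-1}(\P^{(1-\beta)\iota}+\P^{\beta\mu_k} - P_k)(f-g) \ge 0$ whenever $f \ge g$.

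\emph{Monotone sandwich.} Next I would prove by induction on $k$ the invariants (i) $T_k\Q_k \ge \Q_k$, (ii) $\Q_{k+1} \ge \Q_k$, and (iii) $\Q_k \le \q^*$. The base case is the hypothesis $T_0\Q_0 \ge \Q_0$; iterating the monotone contraction $T_0$ from $\Q_0$ gives $\Q_0 \le \Q^{\mu_0,\iota}_\beta \le \q^*$. For the step, (ii) follows from the resolvent form since $(I-\gamma P_k)^{-1}\ge I\ge 0$ and $T_k\Q_k - \Q_k \ge 0$; (iii) follows because $\Q_k \le \Q^{\mu_k,\iota}_\beta$ (iterate $T_k$ from (i)) and $\R_k$ is monotone with fixed point $\Q^{\mu_k,\iota}_\beta$, so $\Q_{k+1} = \R_k\Q_k \le \Q^{\mu_k,\iota}_\beta \le \q^*$. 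The crux is propagating (i): a short computation from the resolvent form gives $T_k(\R_k\Q_k) - \R_k\Q_k = \gamma\big(\P^{(1-\beta)\iota} + \P^{\beta\mu_k} - P_k\big)(I-\gamma P_k)^{-1}(T_k\Q_k - \Q_k) \ge 0$, using the entrywise bound and $T_k\Q_k - \Q_k\ge 0$; combined with $\T^{\mu_{k+1},\iota}_\beta\Q_{k+1} \ge T_k\Q_{k+1}$ (switching to the greedy $\mu_{k+1}$ only enlarges the $\P^{\beta\mu}$-term evaluated at $\Q_{k+1}$) this yields $T_{k+1}\Q_{k+1} \ge \Q_{k+1}$. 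I expect this propagation step to be the main obstacle: the point is that the traced multi-step backup $\R_k$ preserves the property ``$T_k(\cdot)\ge(\cdot)$'', which is exactly where the domination $P_k \le \P^{(1-\beta)\iota}+\P^{\beta\mu_k}$ (inserting traces $\le1$ can only contract the transition) is essential.

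\emph{Contraction.} Finally, (i) with the resolvent form gives $\Q_{k+1} = \R_k\Q_k \ge T_k\Q_k = \T^{*,\iota}_\beta\Q_k$ (last equality by greediness), while (iii) gives $\Q_{k+1}\le\q^*$; using that $\q^*$ is the fixed point of the $\gamma$-contraction $\T^{*,\iota}_\beta$, one obtains $0 \le \q^* - \Q_{k+1} \le \q^* - \T^{*,\iota}_\beta\Q_k = \T^{*,\iota}_\beta\q^* - \T^{*,\iota}_\beta\Q_k \le \gamma\,\|\Q_k - \q^*\|\,\mathbf{1}$, hence $\|\Q_{k+1} - \q^*\| \le \gamma\|\Q_k - \q^*\|$; iterating this bound yields $\Q_k \to \q^*$.
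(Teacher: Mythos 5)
Your proof is correct and takes exactly the route the paper intends: the paper omits its own argument, saying only that it is ``a simpler version of that of Theorem 2 from \cite{munos2016safe}'', and your write-up is precisely that specialization --- the resolvent form $\R_k\q=\q+(I-\gamma P_k)^{-1}(\T^{\mu_k,\iota}_\beta\q-\q)$, the entrywise domination $P_k\le\P^{(1-\beta)\iota}+\P^{\beta\mu_k}$, the monotone sandwich $\Q_k\le\Q_{k+1}\le\q^*$ with propagation of $\T^{\mu_k,\iota}_\beta\Q_k\ge\Q_k$, and the final contraction via $\Q_{k+1}\ge\T^{*,\iota}_\beta\Q_k$. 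All the individual steps check out (in particular the identity $\T^{\mu_k,\iota}_\beta\R_k\Q_k-\R_k\Q_k=\gamma(\P^{(1-\beta)\iota}+\P^{\beta\mu_k}-P_k)(I-\gamma P_k)^{-1}(\T^{\mu_k,\iota}_\beta\Q_k-\Q_k)$), so there is nothing to flag.
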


We do not give a proof of online convergence at this time, but 
verify it empirically in our experiments.




\subsection{Option duration and solution quality}
Our convergence results show that learning about shorter options
off-policy is more efficient than on-policy. Here, we motivate why one
would want to learn about shorter options in the first place. In
particular, we will show that given a set of options, the more they
terminate, the better the resulting control solution at the primitive
action resolution. Intuitively, this is because upon termination, the learner picks the
 current best option, whereas during option execution, the target value includes the potentially suboptimal
current option (Proposition~\ref{prop:offpol-options}). The following
theorem (proof in appendix) formalizes this intuition, and may be of
independent interest.
\begin{theorem}[The more options terminate, the better the solution.]
\label{thm:marginal-upper-bound}
 Given a set of options $\O$, and a greedy
 policy over options $\mu$. Let $\beta \geq \zeta$ be two
 termination conditions for the options in $\O$. Then: $\qmi_\beta \geq \qmi_\zeta.$
\end{theorem}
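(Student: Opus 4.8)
The plan is to reduce the claim to a monotonicity statement about the fixed points of the operators $\T^{\mu,\iota}_\beta$ and $\T^{\mu,\iota}_\zeta$, using the closed form from Proposition~\ref{prop:offpol-options}, namely $\qmi_\beta = (I-\gamma(\P^{\beta\mu}-\P^{\beta\iota})-\gamma\P^{1\iota})^{-1}\r^\pi$. First I would rewrite this in a more suggestive form. Observe that $\P^{1\iota} = \P^{(1-\beta)\iota}+\P^{\beta\iota}$, so the matrix being inverted is $I - \gamma\P^{(1-\beta)\iota} - \gamma\P^{\beta\mu}$, i.e. $\qmi_\beta$ is the fixed point of $\q = \r^\pi + \gamma\P^{(1-\beta)\iota}\q + \gamma\P^{\beta\mu}\q = \T^{(1-\beta)\iota}\q + \T^{\beta\mu}\q = \T^{\mu,\iota}_\beta\q$. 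Since $\mu$ is greedy and held fixed, this is an affine operator; I would check it is a $\gamma$-contraction in sup-norm (its "transition" part $\P^{(1-\beta)\iota}+\P^{\beta\mu}$ is substochastic, being a convex-in-rows combination of the stochastic kernels $\P^{\iota}$ and $\P^{\mu}$ composed with the option dynamics and discounted by $\gamma$), so the fixed point is unique and monotone in the operator: if $\T^{\mu,\iota}_\beta \q \geq \q$ then iterating drives $\q$ up to $\qmi_\beta$, hence $\qmi_\beta \geq \q$ for any such $\q$.

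The key step is then to show $\T^{\mu,\iota}_\beta\,\qmi_\zeta \geq \qmi_\zeta = \T^{\mu,\iota}_\zeta\,\qmi_\zeta$, after which the monotonicity argument above finishes the proof. Subtracting, it suffices to show $\T^{\mu,\iota}_\beta\,\q - \T^{\mu,\iota}_\zeta\,\q \geq 0$ at $\q = \qmi_\zeta$. Expanding, $\T^{\mu,\iota}_\beta\q - \T^{\mu,\iota}_\zeta\q = \gamma\big(\P^{(1-\beta)\iota} - \P^{(1-\zeta)\iota}\big)\q + \gamma\big(\P^{\beta\mu} - \P^{\zeta\mu}\big)\q$, and since the $\beta$- and $\zeta$-weighted continuation/termination operators differ only through the pointwise factors $\beta^o(s')$ vs.\ $\zeta^o(s')$ inside the sum over $s'$, this collapses to $\gamma\sum_{s'}\p^{\pio}_{ss'}\,(\beta^o(s') - \zeta^o(s'))\,\big(\E_\mu\q(s',\cdot) - \q(s',o)\big)$. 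Because $\beta \geq \zeta$ by hypothesis, the first factor is nonnegative, so the whole expression is nonnegative provided $\E_\mu\qmi_\zeta(s',\cdot) - \qmi_\zeta(s',o) \geq 0$ for all $s',o$ — i.e.\ the option-maintaining policy $\iota$ is never better at $\qmi_\zeta$ than the greedy-over-options policy $\mu$. This is where greediness of $\mu$ is essential: since $\mu$ picks $\arg\max_{o'}\qmi_\zeta(s',o')$, we have $\E_\mu\qmi_\zeta(s',\cdot) = \max_{o'}\qmi_\zeta(s',o') \geq \qmi_\zeta(s',o)$ for every $o$.

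The main obstacle I anticipate is making the "differ only through the pointwise factor" step fully rigorous, i.e.\ confirming that $\P^{(1-\beta)\iota}$ and $\P^{\beta\mu}$ recombine cleanly so that the cross terms cancel and one is left with exactly the single-step difference $\gamma\sum_{s'}\p^{\pio}_{ss'}(\beta^o_{s'}-\zeta^o_{s'})(\E_\mu\q - \q)$ — in particular being careful that the $(I-\gamma\P^{(1-\beta)\iota})^{-1}$ "geometric sum over the option's internal steps" does not spoil the comparison. I would handle this by working entirely at the level of the one-step operators $\T^{(1-\beta)\iota}+\T^{\beta\mu}$ (whose fixed point Proposition~\ref{prop:offpol-options} already identifies with that of $\To^\mu$), thereby avoiding the matrix inverse altogether and keeping the argument to the three clean steps above: (i) the combined one-step operator is an affine $\gamma$-contraction with fixed point $\qmi_\beta$ and is monotone; (ii) the greedy policy $\mu$ satisfies $\E_\mu\qmi_\zeta \geq \qmi_\zeta(\cdot,o)$ pointwise; (iii) hence $\T^{\mu,\iota}_\beta\,\qmi_\zeta \geq \qmi_\zeta$, and monotonicity of the contraction gives $\qmi_\beta \geq \qmi_\zeta$.
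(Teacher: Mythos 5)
Your proof is correct, and it takes a genuinely different route from the paper's. The paper works directly with the closed form from Proposition~\ref{prop:offpol-options}: it expands both fixed points as Neumann series $\sum_t \gamma^t(\P^{\beta\mu}+\P^{(1-\beta)\iota})^t\r^\pi$, algebraically factors the difference $\q^{\mu,\iota}_\beta - \q^{\mu,\iota}_\zeta$ into $(\P^{1\mu}-\P^{1\iota})$ applied to an auxiliary Q-function $f$ built from the two transition kernels, and concludes from monotonicity of $\P^{1\mu},\P^{1\iota}$ and optimality of $\mu$ that the result is nonnegative. Your argument is instead the standard policy-improvement-style fixed-point comparison: establish that $\T^{\mu,\iota}_\beta$ is a monotone affine $\gamma$-contraction, verify the one-step dominance $\T^{\mu,\iota}_\beta\,\q^{\mu,\iota}_\zeta \ge \T^{\mu,\iota}_\zeta\,\q^{\mu,\iota}_\zeta = \q^{\mu,\iota}_\zeta$ (which correctly collapses to $\gamma\sum_{s'}\p^{\pio}_{ss'}(\beta^o_{s'}-\zeta^o_{s'})(\E_\mu\q^{\mu,\iota}_\zeta(s',\cdot)-\q^{\mu,\iota}_\zeta(s',o))\ge 0$), and iterate. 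This buys you two things: it avoids the resolvent manipulations entirely (the paper's auxiliary operator ${\cal A}_t$ involves inverting an operator difference that can be singular, e.g.\ when $\beta=\zeta$), and it makes completely explicit which Q-function $\mu$ must be greedy with respect to, namely $\q^{\mu,\iota}_\zeta$, whereas the paper needs $\P^{1\mu}f\ge\P^{1\iota}f$ for a far less transparent auxiliary $f$. The one point worth pinning down in your write-up is exactly that: the theorem says only ``a greedy policy over options,'' and your step (ii) needs greediness (or at least $\E_\mu\q^{\mu,\iota}_\zeta(s,\cdot)\ge \q^{\mu,\iota}_\zeta(s,o)$ for all $s,o$) with respect to $\q^{\mu,\iota}_\zeta$ specifically; the symmetric argument, applying $\T^{\mu,\iota}_\zeta$ to $\q^{\mu,\iota}_\beta$, covers the case where $\mu$ is instead greedy w.r.t.\ $\q^{\mu,\iota}_\beta$.
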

Note that this result refers to the target solution. During learning,
the more decisions there is to make, the more
potential there is for error. As such, in reasonably complex tasks we expect the performance to obey a tradeoff on $\beta$.

\section{Experiments}


Finally, let us evaluate our algorithm empirically.
We aim to illustrate the following claims:
\begin{itemize}
 \item The learning speed improves as $\zeta$ gets smaller (behavior options
   get longer);
\item The control performance improves, as $\beta$ gets
  larger (target options get shorter);
 \item Q($\beta$) converges with off-policy terminations.
\end{itemize}

For simplicity, we assume 
that  options terminate deterministically in a set of goal states. We
hence reduce $\beta$ and $\zeta$ to single parameters that determine the
likelihood of terminating {\em before} reaching the goal. The ``plain''
variant refers to the on-policy intra-option
update from Eq.~\eqref{eq:intra-option-multi-step}. The complete experimental
details are provided in appendix~\ref{sec:exp-details}.

\begin{figure}[h]
  \centering
   \includegraphics[scale=0.9]{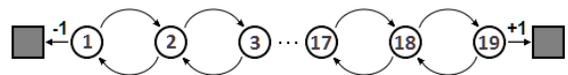}
  \caption{The 19-state random walk task. The agent starts in the
    middle. Transitions are
    deterministic, and the task terminates in each end.
  }
  \label{fig:chain}
\end{figure}

\subsection{Policy evaluation}


First, we show that Q($\beta$) learns the correct values on the
19-state random walk task (Fig.~\ref{fig:chain}).
There are two options: one leads all the way to the left, the other to the right.
 The policy over options is uniform. 
The task is to estimate the value function w.r.t. target terminations $\beta$.
The results are given in Figure~\ref{fig:prediction}. Q($\beta$) is
able to learn the correct values (up to an irreducible
exploration-related error). As expected, Q($\beta$) gets more
efficient as behavior options get longer ($\zeta$ gets smaller). The opposite is true for the plain algorithm,
   since without interrupting sufficiently, it does not have a chance
   to update the intermediate states with anything other than the
   option policy. There is a small inflection point in the performance
   of the plain algorithm at the {\em on-}policy value of $\zeta$. 

\begin{figure*}
    \centering
    \begin{subfigure}{0.56\textwidth} 
        \centering
        \includegraphics[scale=0.34]{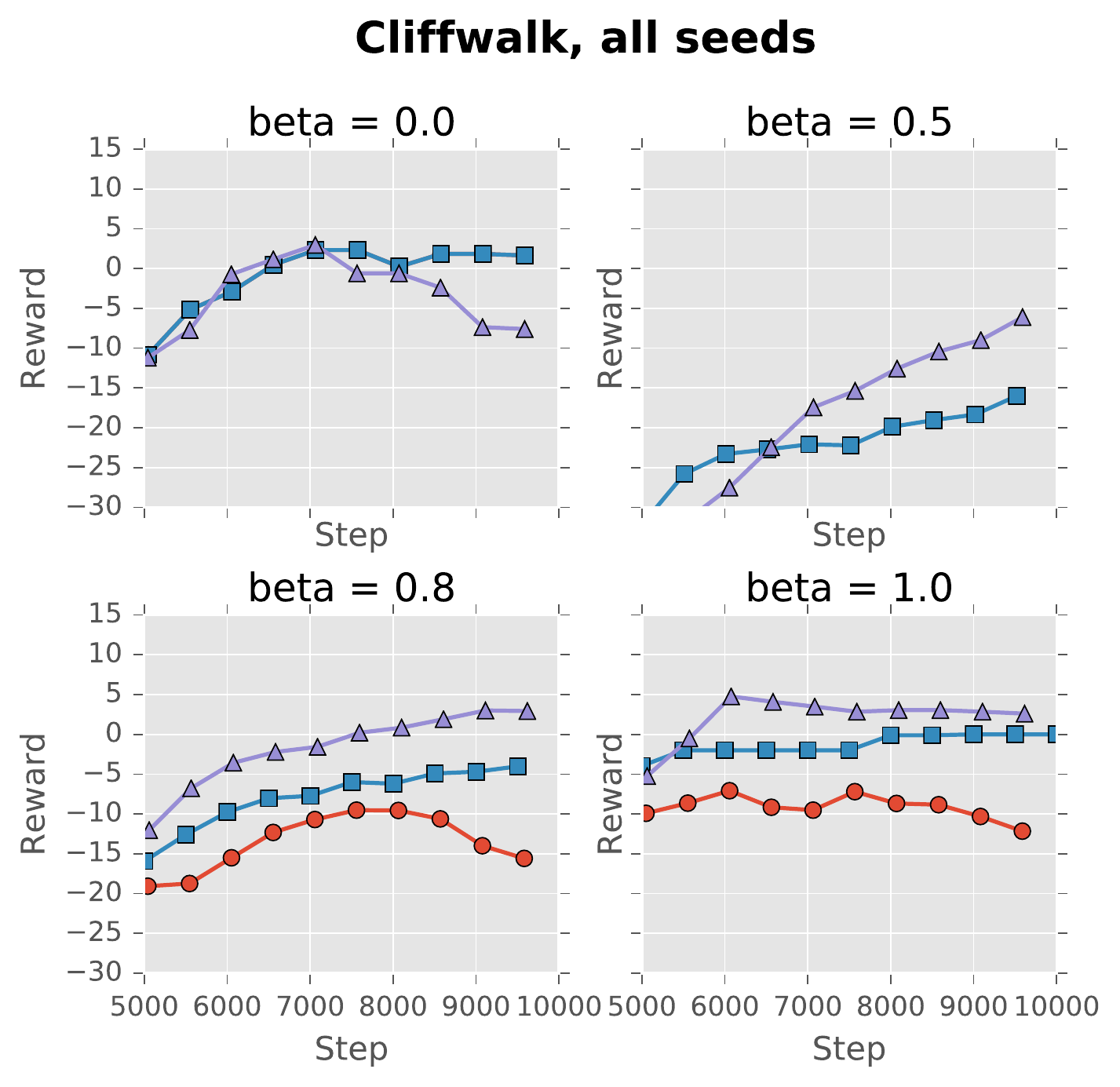}
       \includegraphics[scale=0.34]{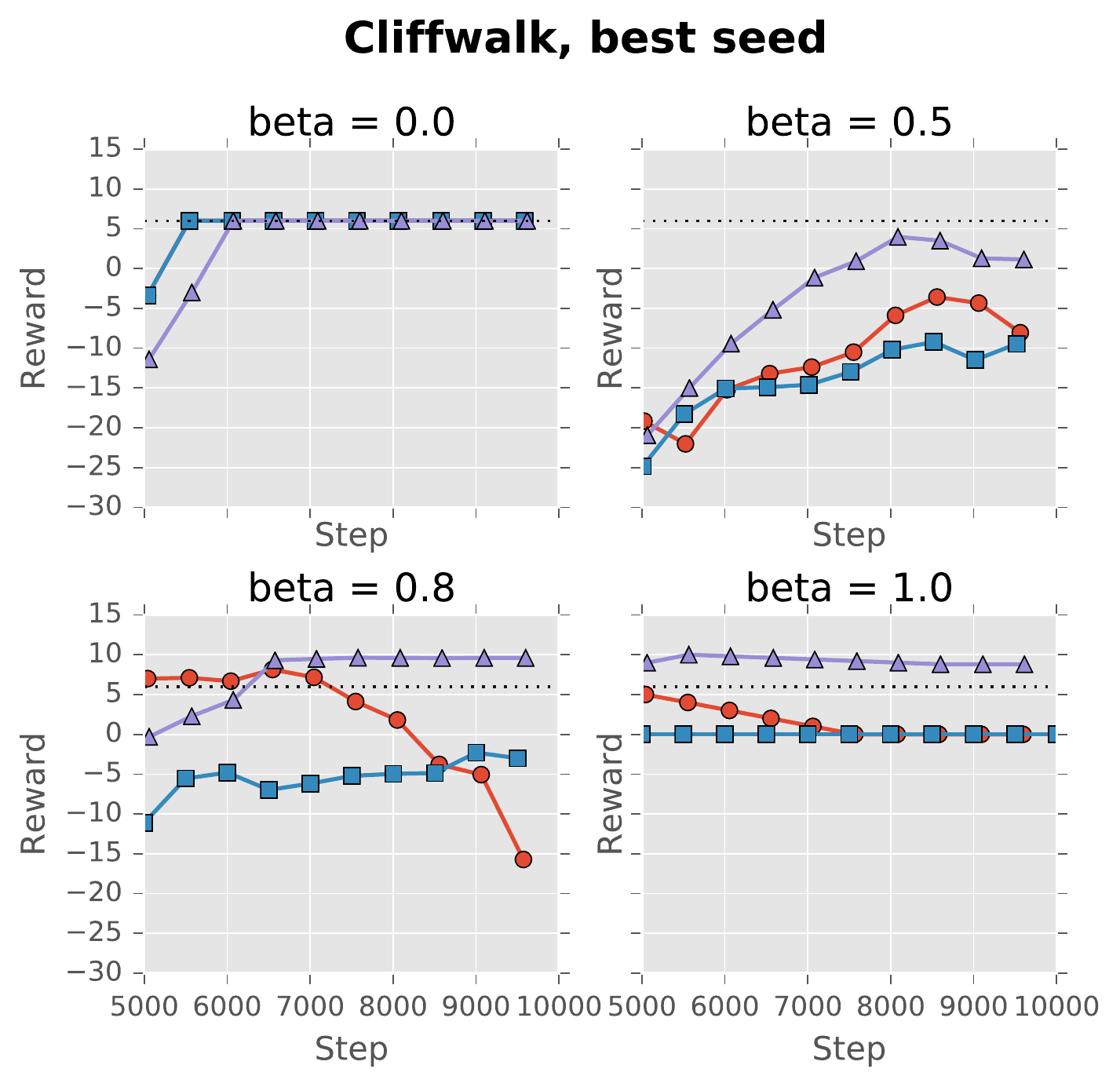}\\
       \includegraphics[scale=0.4]{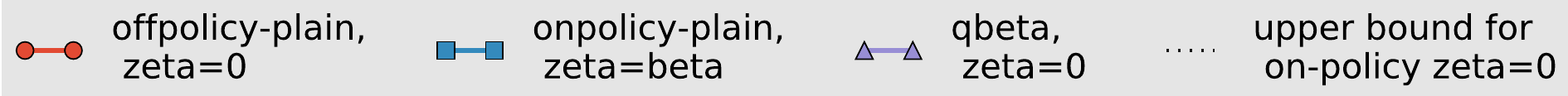}
        \caption{}\label{fig:toy-control}
   \end{subfigure}
\hfill
    \begin{subfigure}{0.43\textwidth} 
      \centering
        \includegraphics[scale=0.34]{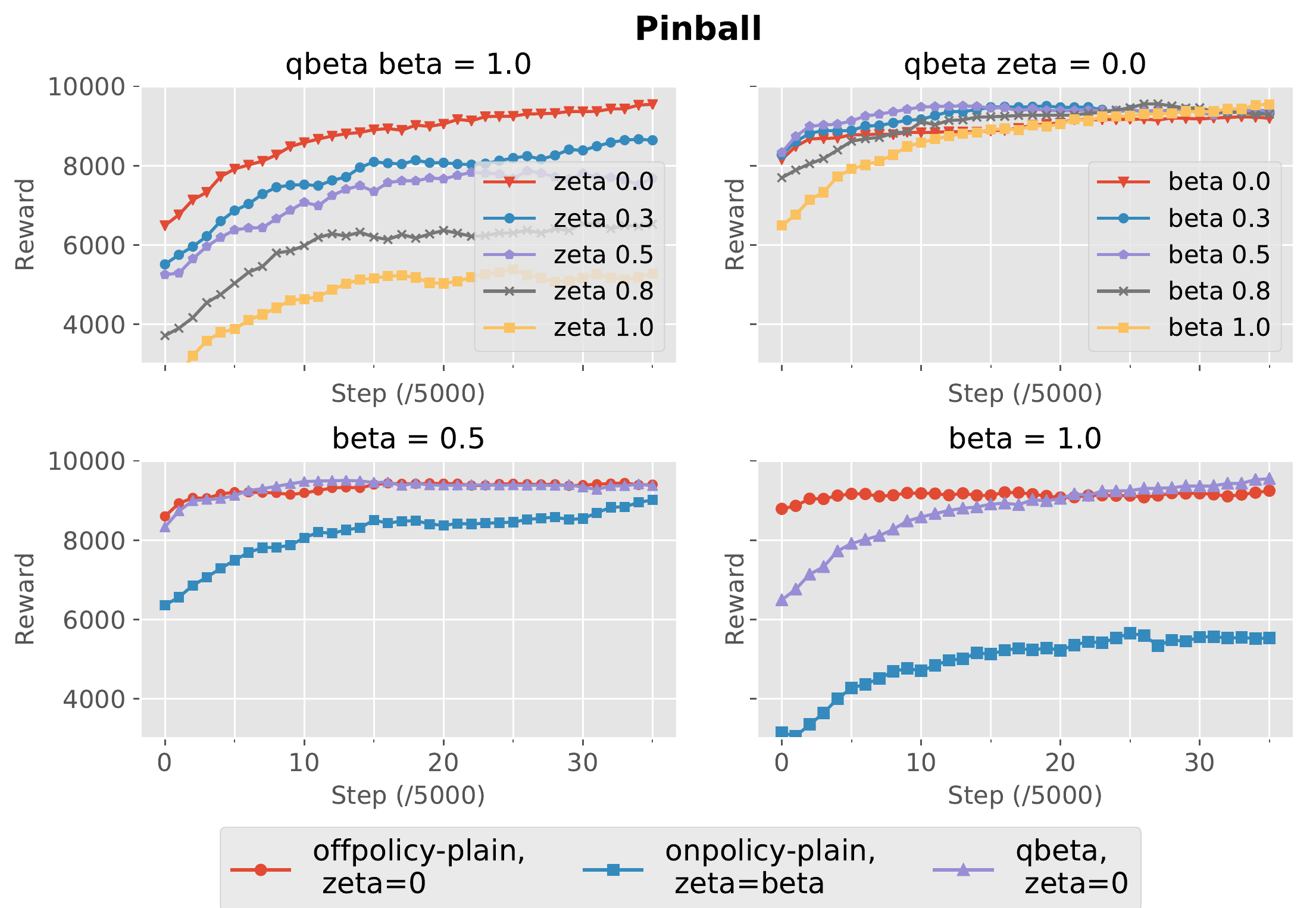}
        \caption{}\label{fig:pinball}
    \end{subfigure}
    \caption{Control performance. {\bf (a)} Cliffwalk. Each variant is
      evaluated on 5 seeds for 10 runs each. {\em Left:} Average performance per value
          of $\beta$ on all seeds.  {\em Right: } Learning curves for
          the best seeds per variant. Notice how Q($\beta$) is the
          only variant that escapes the plateau of the suboptimal
          policy. {\bf (b)} Pinball. Each variant is evaluated on 20
    independent runs.  {\em Top row:} Influence of $\beta$ and $\zeta$ on
    Q($\beta$): performance improves as $\zeta$ gets smaller. Overall,
    intermediate target $\beta$-s are best, but $\beta=1$ reaches
    slightly better performance at the end of learning.
{\em Bottom row:} Comparison within
   the variants for select values of $\beta$}
\end{figure*}

\begin{figure}
  \centering
\includegraphics[scale=0.32]{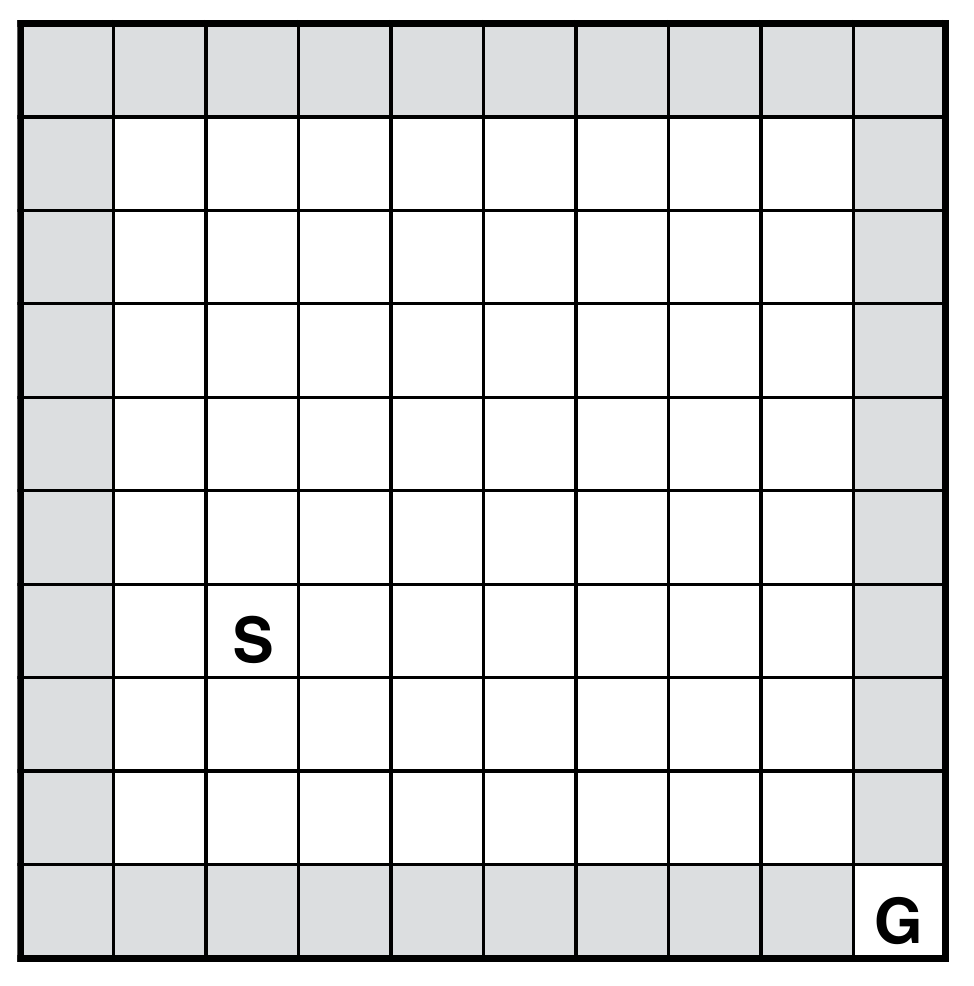}
\hspace{2em}
  \includegraphics[scale=0.2]{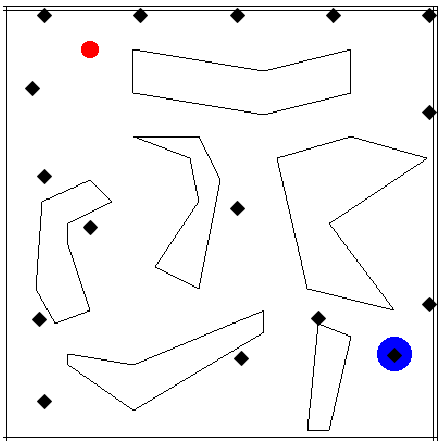}
\caption{
{\bf Left:} The modified cliffwalk task. Shaded regions are cliffs.
{\bf Right:}
Pinball domain configuration
    used. The red ball must be moved to the blue hole. Each black
    diamond indicates an option landmark.}
\label{fig:domains}
\end{figure}

\subsection{Control}

To demonstrate the benefit of decoupled off- and on- policy
terminations, we compare our algorithm with the plain on-policy variant (labelled: {\em onpolicy-plain}) that uses $\zeta=\beta$ during both learning and
evaluation. In order to demonstrate that the learning target plays a
role, we also compare it with the plain algorithm that uses $\beta$ at
evaluation only (labelled: {\em offpolicy-plain}). The behavior
$\zeta=0$, unless specified
otherwise.

\subsubsection{Modified Cliffwalk}
We illustrate the benefits of off-policy termination on a modified
Cliffwalk example (Fig.~\ref{fig:domains}, left).
The agent starts in a position
inside a $n\times n$ grid with the goal of getting to a corner where a
positive reward is given.  The step reward is zero, but there are
small cliffs along the border that aren't fatal, but induce a
penalty. We have four options, one for each
cardinal direction, that take the agent up until the corresponding
border (and cliff). Thus, while these options are able to learn to reach the
goal in an optimal number of {\em steps}, they are unable to learn the
optimal policy which only moves inside the grid, so as to not
encounter cliffs. To ensure adequate exploration we consider
$\epsilon_{opt}$-soft option policies, as well as a usual
$\epsilon$-greedy policy over options during learning (but not during
evaluation). 
The results are plotted in Fig.~\ref{fig:toy-control}. Q($\beta$)
outperforms the alternatives in all cases, and its performance
improves with larger $\beta$.
Note that it is the only variant
able to surpass the value of the suboptimal upper bound of the naive approach.

\subsubsection{Pinball}
We finally evaluate our algorithm on a variation of the Pinball
domain~\cite{KonidarisSkillChaining09}. 
Here, a small ball must be maneuvered through a set of obstacles into
a hole. Observations consist of four continuous variables describing the
ball's $x, y$ positions and velocities. There are five primitive actions:
the first four apply a small force to either the $x$ or $y$ velocity,
the final action leaves all velocities unchanged. There is a step
penalty and a final reward. We define a set of {\em landmark}
options~\cite{mann2015approximate} that move the ball near a target
goal location on the board. The agent can initiate and terminate
each option from within some initiation and termination distances
from the respective landmark. To illustrate the benefits of terminating
suboptimal options, landmarks were placed in such a way that the paths
from start to the goal via the landmarks are suboptimal
(Fig.~\ref{fig:domains}, right).
The results are plotted in Fig.~\ref{fig:pinball}.
As expected, the
performance of Q($\beta$) drastically improves with longer behavior
options. The influence of $\beta$ is more subtle: intermediate
$\beta$-s perform best overall, but $\beta=1$ reaches slightly 
better eventual performance.

In a comparison, Q($\beta$) outperforms the on-policy
variant that learns with $\zeta=\beta$. However, in this domain, the
off-policy variant (that learns with $\zeta=0$, but evaluates with
$\beta$) performs comparably to Q($\beta$).
This may in part be due to
the use of function approximation, which allows the
plain update to generalize meaningfully within the option trajectory, and in part
due to the noisy nature of Pinball, in which there are many optimal
policies of similar values. Since, as we have seen, Q($\beta$) is the
only variant to learn accurate values, we expect it to stand out more in settings where the reward scheme is more intricate.



\section{Related work}

Much of the related work has already been discussed throughout.
We mention a few more
relevant works below.

\citeauthor{mann2014time}~\shortcite{mann2014time} propose an algorithm for multi-step option
interruption that stems from the same motivation of mitigating
poor-quality options. In order to avoid the resulting options being too short, they introduce a time-regularization term. Our approach
bypasses the need to do so by interrupting off-policy and the
ability to explicitly specify target
terminations. 

\citeauthor{ryan2002using}~\shortcite{ryan2002using}
also considers the problem of {\em termination improvement}, or
interrupting options when they are no longer relevant, in a hybrid
decision-theoretic and classical planning setting. The intuitions in
that work are particularly aligned with the ones here -- in particular the tradeoff between efficiency and
optimality is hinted at, and it is even suggested that ``persistence [with an option] is a kind of
off-policy exploration, at the behavior level.'' 
In this work, we formalize and exploit this intuition.

\citeauthor{yu2012weighted}~\shortcite{yu2012weighted} consider general $\lambda$-operators with
state-dependent $\lambda$.
In the case of options, $1-\zeta^o$ takes the role of a
state-option-dependent
$\lambda$. \citeauthor{white2017unifying}~\shortcite{white2017unifying}
proposes to
consider $1-\zeta^o$ as part of the transition-based {\em discount} instead.

\iflong
The {\em persistent} operators from \cite{bellemare2016increasing} have a flavor similar
to this, but only on the surface. The key idea there is to repeat the
current action if the state observation is the same, so as to reduce
the effects of non-stationarity during learning. 
\fi

\section{Discussion}

We propose decoupling behavior and target termination
conditions, like it is done with policies in off-policy learning. We
formulate an algorithm for learning target terminations off-policy,
analyze its expected convergence, and validate it empirically,
confirming the theoretical intuition that learning shorter options
from longer options is beneficial both computationally and
qualitatively. 
More
generally, we cast learning with options into a common framework with well-studied multi-step off-policy temporal difference
  learning, which allows us to carry over existing results with ease.

\subsubsection{Learning longer options from shorter options.} We have
assumed here that the options are given, but may not
express the optimal policy well. This scenario applies when the
options describe simple rules of thumb, or are transferred from a different task. If the options are not given, but learnt
end-to-end, our wish typically is to distill meaningful behavior in
them. However, instead, the result often ends up reducing to degenerate
options~\cite{bacon2016option,mann2014time}. Being able to impose
longer durations on the target off-policy may mitigate this. 
Though it should be noted that our convergence results suggest that
learning may not be as efficient then.


\subsubsection{Action-level importance sampling.}
The option policy term in the trace is ambivalent to the
action choice. Thus if $\mu(o|S_i)$ is small, $c^o_i$ will be small, even if the taken
action is consistent with the option policy $\pio$. 
It would be interesting to replace this term with the importance
sampling ratio at the primitive action level, like
$\frac{\kappa(A_i|S_i)}{\pi^o(A_i|S_i)}$, which corresponds to
another multi-step off-policy algorithm,
Retrace($\lambda$)~\cite{munos2016safe}. 
\iflong This can be done easily if
$\beta = {\mathbf 1}$ uniformly, since the option dimension is then
superfluous (Eq.~\eqref{eq:qkappa}). 
\fi
Another direction towards
this goal is to incorporate the intra-option
correction from Eq.~\eqref{eq:intra-option-is}.

\subsubsection{Online convergence.}  
Proving online convergence of Q($\beta$) remains an open
problem. 
Supported by reliable empirical behavior, we hypothesize that it holds under
reasonable conditions and plan to investigate it further in the future.

\subsubsection{Future.} 
A natural direction for future work is to learn
many $\beta$-s in parallel as is done in classical off-policy
learning~\cite{sutton11}, or learn a new kind\footnote{I.e.,
  not one indifferent to the reward, as in
  \cite{szepesvari2014universal}, but one indifferent to the termination scheme.} of a {\em universal}
option model using the ideas from \cite{schaul2015universal}. It is
also worthwhile to extend the convergence results to approximate
state spaces, perhaps similarly to \cite{touati2017convergent}. 

\bibliographystyle{aaai}
\bibliography{bibliography}

\appendix

\section{Proof of Proposition~\ref{prop:offpol-options}}
\begin{proof}
From Eq.~\eqref{eq:To}, and writing $\q$ for $\Q^{\mu,\iota}_\beta$ for
less clutter:
  \begin{align*}
    \q & = (I - \gamma\P^{(1-\beta)\iota})^{-1} (\r^\pi + \gamma
     \P^{\beta\mu} \q), \\ 
  \q- \gamma\P^{(1-\beta)\iota} \q& = \r^\pi + \gamma
     \P^{\beta\mu} \q, \\ 
  \q & = \r^\pi+ \gamma \P^{\beta\mu}\q + \gamma\P^{(1-\beta)\iota} \q \\
  & = \r^\pi + \gamma \P^{\beta\mu}
                    \q+ \gamma\P^{1\iota} \q-
                    \gamma \P^{\beta\iota}\q \\
   & = \r^\pi + \gamma (\P^{\beta\mu} - \P^{\beta\iota})\q  +
     \gamma\P^{1\iota} \q.
\end{align*}
Solving for $\q$ yields the result. 
\end{proof}

\section{Proof of Theorem~\ref{thm:convergence-prediction}}
\begin{proof}
\newcommand{\qtarg}{\q^{\mu,\iota}_\beta}
\newcommand{\btarg}{\beta^o} 
Write $\R$ for $\R^\mu_\beta$.
 The fact that the fixed point is the desired one is clear  from
  Eq.~\eqref{eq:general-update} and
  Proposition~\ref{prop:offpol-options}. Recall that:
\begin{align*}
c^o_i & = 
  (1-\beta^o(S_i) + \beta^o(S_i) \mu(o|S_i)) (1 -
                           \zeta^o(S_i)), \\
\tilde{q}(s, o)  & = (1-\beta^o(s))\q(s, o) +
                             \beta^o(s) \E_\mu \q(s,\cdot). \\
 \end{align*}
Now, let us derive the contraction result. 
Eq.~\eqref{eq:general-update} can be written:
\begin{align*}
  \R \q(s,o) & =
                 \sum_{t=0}^\infty\gamma^t \E_{\pio}\Big[\Big(\prod_{i=1}^tc^o_i\Big)
               T^{co}_t \Big] \\
  T^{co}_t & \defequal r(S_t,A_t)  + \gamma(\tilde{q}(S_{t+1},o) - c^o_{t+1}\q(S_{t+1},o))
\end{align*} 
Let $\Delta\q(s,o) \defequal \q(s, o) - \qtarg(s, o)$, and
$\Delta\q(s,\cdot)$ be defined analogously. 
Since $\R\qtarg = \qtarg$, and after shifting the sum index forward, we have:
\begin{align*}
  \R \q(s,o) - \qtarg(s,o) =
  \sum_{t=1}^\infty\gamma^t \E_{\pio}\Big[\Big(\prod_{i=1}^{t-1}c^o_i\Big)
  \Delta T^{co}_t \Big], 
\end{align*}
\begin{align*}
   \Delta T^{co}_t & =  (1-\beta^o_t)\Delta\q(S_t,o) 
+ \beta^o_t\E_\mu\Delta\q(S_t,\cdot) -
    c^o_{t}\Delta\q(S_t,o)\\
 & =(1-\beta^o_t-c^o_t)\Delta\q(S_t,o) 
+ \btarg_t\sum_{b\in\O}\mu(b|S_t)\Delta\q(S_t,b) \\
  & = \sum_b\Big( \1_{b=o} (1-\beta^b_t - c^b_t) + \beta^o_t\mu(b|S_t)
    \Big) \Delta\q(S_t,b),
\end{align*}
where we write $\1_{b=o}$ for the indicator function. 
Since $c^o_{t} \leq (1-\btarg_t) + \btarg_t \mu(o|S_t)$ and $\btarg_t
\mu(b|S_t) \geq 0$, $\forall b\neq o$, we have a linear combination of $\Delta\q(S_t,b)$
weighted by non-negative coefficients $w_{y,b}$ defined as:
$w_{y,b} =   \sum_{t=1}^\infty\gamma^t \E_{\pio}\Big[\Big(\prod_{i=1}^{t-1}c^o_i\Big)
          (\1_{b=o} (1-\beta^b_t - c^b_t) + \beta^o_t\mu(b|S_t))
          \1_{S_t = y}\Big],$
whose sum is: 
\begin{align*}
  \sum_{b\in\O} w_{y,b} & = \sum_{t=1}^\infty\gamma^t \E_{\pio}\Big[\Big(\prod_{i=1}^{t-1}c^o_i\Big)
          \sum_{b\in\O} \1_{b=o} (1-\beta^b_t - c^b_t) \\
  & \qquad \qquad \qquad + \beta^o_t \mu(b|S_t)
                     \Big] \\
   & = \sum_{t=1}^\infty\gamma^t \E_{\pio}\Big[\Big(\prod_{i=1}^{t-1}c^o_i\Big)
     ((1-\btarg_t - c^o_t) \\
  & \qquad\qquad \qquad + \sum_{b\in\O} \mu(b|S_t) \beta^o_t \Big] \\
  & = \sum_{t=1}^\infty\gamma^t \E_{\pio}\Big[\Big(\prod_{i=1}^{t-1}c^o_i\Big)
    ((1-\btarg_t - c^o_t) + \beta^o_t )\Big] \\
  & = \sum_{t=1}^\infty\gamma^t \E_{\pio}\Big[\Big(\prod_{i=1}^{t-1}c^o_i\Big) (1-
    c^o_t)\Big] \\
   & = \E_{\pio} \left[ \sum_{t=1}^\infty \gamma^t
     \Big(\prod_{i=1}^{t-1}c^o_i\Big) -  \sum_{t=1}^\infty \gamma^t
     \Big(\prod_{i=1}^{t}c^o_i\Big) \right] \\
  & = \gamma C - (C - 1) \leq \gamma
\end{align*}
where $C = \E_{\pio}\left[\sum_{t=0}^\infty \gamma^t
     \Big(\prod_{i=1}^{t}c^o_i\Big)\right]$, and the last inequality
   is due to $C \geq 1$. It follows that $\R$ is a
   $\gamma$-contraction around $\qtarg$.
\end{proof}

\section{Proof of Corollary~\ref{corr:faster-convergence}}
\begin{proof}
We would like for the off-policy trace $c^o_s =  (1-\beta^o_s + \beta^o_s \mu(o|s))  (1 - \zeta^o_s)$
to be larger than the equivalent on-policy trace $1 -
\beta^o_s$.
\begin{align*}
  1 - \beta^o_s & \leq (1 - \beta^o_s + \beta^o_s \mu(o|s))  (1 - \zeta^o_s) \\
 (1 - \beta^o_s) \zeta^o_s  & \leq \beta^o_s \mu(o|s) (1 -
                                    \zeta^o_s)  \\
\zeta^o_s & \leq \beta^o_s (\mu(o|s) (1 - \zeta^o_s) + \zeta^o_s) \\
\beta^o_s & \geq \frac{\zeta^o_s}{\mu(o|s) (1 - \zeta^o_s) +
            \zeta^o_s} \\
 & = 1-\frac{\mu(o|s)  (1-\zeta^o_s)}{\mu(o|s) (1 - \zeta^o_s) +
            \zeta^o_s} 
\end{align*}
Thus, if $\beta^o$ obeys this bound, it is more beneficial to learn it off-
rather than on- policy, from the point of view of convergence speed.
\end{proof}

\section{Proof of Theorem~\ref{thm:marginal-upper-bound}}
\begin{proof}
First notice that Proposition~\ref{prop:offpol-options} rewrites:
\begin{align}
   \q & = (I -  \gamma (\P^{\beta\mu} - \P^{\beta\iota})  -
     \gamma\P^{1\iota} )^{-1} \r^\pi \notag \\    
& = (I -  \gamma \P^{\beta\mu} - \gamma \P^{(1-\beta)\iota})^{-1}
  \r^\pi \notag \\
 & = \sum_{t=0}^\infty\gamma^t(\P^{\beta\mu} + \P^{(1-\beta)\iota})^t
   \r^\pi. \label{eq:prop-rewrite}
  \end{align}
  Let $\Delta^\zeta_\beta \defequal \q^{\mu,\iota}_\beta  -
  \q^{\mu,\iota}_\zeta$. From the above~\eqref{eq:prop-rewrite}, we have:
  \begin{align*}
   \Delta^\zeta_\beta  & = \sum_{t=1}^\infty\gamma^t ((\P^{\beta\mu} +
     \P^{(1-\beta)\iota})^t - (\P^{\zeta\mu} + \P^{(1-\zeta)\iota})^t)
     \r^\pi \\
    & = (\P^{\beta\mu} + \P^{(1-\beta)\iota} - \P^{\zeta\mu} - \P^{(1-\zeta)\iota}) \sum_{t=1}^\infty\gamma^t {\cal A}_t
     \r^\pi,
\end{align*}
where ${\cal A}_t =  (\P^{\beta\mu} + \P^{(1-\beta)\iota} - \P^{\zeta\mu} -
\P^{(1-\zeta)\iota})^{-1}  ((\P^{\beta\mu} + \P^{(1-\beta)\iota})^t -
(\P^{\zeta\mu} - \P^{(1-\zeta)\iota})^t)$. 
\newcommand{\f}{f}
Let $f =\sum_{t=1}^\infty\gamma^t {\cal A}_t
     \r^\pi$ be a Q-function. Then simple manipulations yield:
\begin{align*}
    \Delta^\zeta_\beta = \q^{\mu,\iota}_\beta - \q^{\mu,\iota}_\zeta 
& = (\P^{(\beta-\zeta)\mu} + \P^{(I - \beta - I + \zeta))\iota} ) \f \\
    & = (\P^{(\beta-\zeta)\mu} - \P^{(\beta-\zeta)\iota} ) \f \\
    & = (\P^{1\mu}- \P^{1\iota}) (\beta-\zeta) f \\
    & \geq (\P^{1\mu}- \P^{1\iota}) \f \geq {\mathbf 0},
  \end{align*}
since the operators $\P^{1\mu}$ and $\P^{1\iota}$ are monotone, $\beta
\geq \zeta$ and $\P^{1\mu}\f = \max_\nu\P^{1\nu}\f \geq
\P^{1\iota}\f$ for any Q-function $f$.
\end{proof}

\section{Experimental details}
\label{sec:exp-details}

The setting is as follows: an option $o$ is picked according to $\mu$, and
a trajectory $s, R_1, S_2, R_2, \ldots, R_{D-1}, S_D$ is generated
according to $\pio$ and $\zeta^o$. Then for each state $S_i$ in the
trajectory, $\q(S_i, o)$ is updated according to the considered algorithm.

\subsection{19-chain}
The termination conditions $\beta$ and $\zeta$ are
evaluated in the range of $\{0.1, 0.5, 0.8, 1\}$, with the first value
being positive to ensure adequate state visitation. The step-sizes set via a linear search over
$\alpha\in\{0.1, 0.2, 0.3, 0.4\}$. The discount factor
$\gamma=0.99$.

\subsection{Modified Cliffwalk}
The reward scheme is: $r_{goal} = 10$ and $r_{cliff} = -2$, and the
grid size $n=10$.
We set $\epsilon=0.1$, and $\epsilon_{opt}=0.3$, and determine the step-size for
each variant from a linear search over $\alpha\in\{0.1, 0.2, 0.3,
0.4\}$, behavior $\zeta=0$, and target $\beta$ is evaluated on the range of $\{0, 0.5, 0.8,
1\}$.  The discount factor
$\gamma=0.99$.

\subsection{Pinball}
The reward is $-1$ on every step, except the final step which receives a
reward of $10000$. We use initiation distance of $0.3$ and termination
distance of $0.03$. The state option value function was approximated using tile coding
with $16$, $10\times 10$ tilings. All algorithms used a learning rate
$\alpha=0.01$, discount $\gamma=0.99$ and an exploration rate
$\epsilon=0.05$ and $\epsilon_{opt}=0.01$ during learning. The target
$\beta$ is evaluated on the range of $\{0, 0.3, 0.5, 0.8, 1\}$. 

\end{document}